\documentclass[accepted]{uai2026}
\usepackage[american]{babel}
\usepackage[utf8]{inputenc}
\usepackage[T1]{fontenc}
\usepackage{natbib}
\usepackage{hyperref}
\hypersetup{colorlinks=true,linkcolor=blue!50!black,citecolor=blue!50!black,urlcolor=blue!50!black}
\usepackage{url}
\usepackage{wrapfig} 
\usepackage{algorithm}
\usepackage{algorithmic}
\usepackage{booktabs}
\usepackage{amsfonts}     
\usepackage{amsmath}
\usepackage{amsthm}
\usepackage{color,xcolor}
\usepackage{mathtools}
\usepackage{nicefrac}
\usepackage{tikz}
\usetikzlibrary{positioning,arrows.meta,calc, matrix}
\usepackage{siunitx}
\usepackage{tabularray}
\usepackage{caption}
\usepackage{adjustbox}
\usepackage{enumitem}
\theoremstyle{plain}
\newtheorem{theorem}{Theorem}[section]

\newtheorem{lemma}[theorem]{Lemma}

\theoremstyle{definition}
\newtheorem{definition}[theorem]{Definition}
\newtheorem{assumption}[theorem]{Assumption}
\theoremstyle{remark}
\newtheorem{remark}[theorem]{Remark}
\newtheorem{example}{Example}
\newcommand{\A}{\mathcal{A}}

\newcommand{\X}{\mathcal{X}}

\newcommand{\R}{\mathbb{R}}
\newcommand{\E}{\mathbb{E}}
\newcommand{\Ss}{\mathcal{S}}

\newcommand{\Lvert}{\left\lvert}
\newcommand{\Rvert}{\right\rvert}
\newcommand{\LVert}{\left\lVert}
\newcommand{\RVert}{\right\rVert}
\newcommand{\Law}{\mathrm{Law}}

\newcommand{\norm}[1]{\left\lVert #1\right\rVert}
\newcommand{\abs}[1]{\left\lvert #1\right\rvert}

\DeclarePairedDelimiterX{\gendivx}[2]{(}{)}{%
  #1\;\delimsize\|\;#2%
}

\usepackage{microtype}
\usepackage{graphicx}
\usepackage{subfigure}
\usepackage{amssymb}
\usepackage[capitalize,noabbrev]{cleveref}
\usepackage[textsize=tiny]{todonotes}
\usepackage{accents}

\title{Joint MDPs and Reinforcement Learning in Coupled-Dynamics Environments}

\author[1]{Ege~C.~Kaya}
\author[1]{Mahsa~Ghasemi}
\author[1]{Abolfazl~Hashemi}

\affil[1]{
    Elmore Family School of Electrical and Computer Engineering\\
    Purdue University\\
    West Lafayette, IN, USA
}
  
\begin{document}
\maketitle

\begin{abstract}
Many distributional quantities in reinforcement learning are intrinsically joint across actions, including distributions of gaps and probabilities of superiority. However, the classical Markov decision process (MDP) formalism specifies only marginal laws and leaves the joint law of counterfactual one-step outcomes across multiple possible actions at a state unspecified. We study \textit{coupled-dynamics environments} with a multi-action generative interface which can sample counterfactual one-step outcomes for multiple actions under shared exogenous randomness. We propose \textit{joint MDPs} (JMDPs) as a formalism for such environments by augmenting an MDP with a multi-action sample transition model which specifies a coupling of one-step counterfactual outcomes, while preserving standard MDP interaction as marginal observations. 
We adopt a local one-step coupling regime in which each queried state admits a shared-randomness outcome table and future tables are freshly sampled recursively. In this regime, we derive Bellman operators for $n$th-order return moments, providing dynamic programming and incremental algorithms with convergence guarantees.
\end{abstract}
\section{Introduction}\label{sec:intro}
Distributional reinforcement learning (DRL) studies the discounted return random variable (RV) $Z^\pi(s, a)$ belonging to a policy $\pi$, and seeks to represent or estimate its distributional properties beyond its expectation. Formally,
\begin{equation}
\begin{gathered}
Z^\pi(s, a) = \sum_{t=0}^\infty \gamma^t R_t,\; S_0 =s, A_0 = a, \; A_{t} \sim \pi(\cdot \mid S_t) \\ R_t \sim P_R(\cdot \mid S_t, A_t),\; S_{t+1}\sim P_S(\cdot \mid S_t, A_t).
\end{gathered}
\end{equation}
DRL methods customarily aim to learn per-action marginal return laws at state $s$, i.e., $\{\Law(Z^\pi(s, a))\}_{a \in \A}$ \citep{c51, DRL-textbook}. However, a range of distributional quantities that are of interest in decision-making \citep{artzner1999coherent, rockafellar2000optimization} are not functions of per-action marginals alone. For actions $a, \Tilde{a} \in \A$, examples include:
\begin{enumerate}[leftmargin=*, itemsep=2pt]
\item the \textit{gap} RV $G^\pi(s; a, \Tilde{a}) := Z^\pi(s, a) - Z^\pi(s, \Tilde{a})$ and its distribution $\Law(G^\pi(s; a, \Tilde{a}))$,
\item any tail functional of $G^\pi(s; a, \Tilde{a})$, such as the quantile $q_\alpha(G^\pi(s; a,\Tilde{a}))$ or the $\mathrm{CVaR}_\alpha(G^\pi(s; a,\Tilde{a}))$,
\item the \textit{probability of superiority} $\mathbb{P}(Z^\pi(s, a) > Z^\pi(s, \Tilde{a}))$.
\end{enumerate}
These quantities depend on the joint law of $(Z^\pi(s, a)), \allowbreak Z^\pi(s, \Tilde{a}))$ and thus require knowledge of coupling structure.

\textbf{A limitation of the Markov decision process (MDP) formalism.} MDPs specify the marginal distributions of the reward and successor state under each action. However, they make no specification about the joint distribution of \textit{counterfactual, one-step outcomes across multiple actions at a state}. Therefore, joint objects such as $\Law(Z^\pi(s, a) - Z^\pi(s, \Tilde{a}))$ are not well-defined without adopting additional coupling conventions \citep{wiltzer2024action}.

\textbf{Coupled-dynamics environments.} The preceding limitation is substantive in environments where counterfactual one-step outcomes under multiple actions are operationally meaningful and sampleable. A canonical setting is scenario-based simulation, where an exogenous disturbance $U$ is realized at each decision point and one-step outcomes are functions of $(s, a, U)$. Evaluating multiple candidate actions under the same realized disturbance is standard in simulation optimization \citep{glasserman2004monte, shapiro2021lectures} and yields a multi-action generative interface. Examples include logistics, dispatch, inventory, energy, robotics, and network simulators where actions are compared under the same demand, traffic, load, cloned-state or latency shock.

\textbf{Contribution and scope.} 
\begin{itemize}[leftmargin=*]
\item We propose \textit{joint MDPs} (JMDPs) as a formalism for coupled-dynamics environments. A JMDP augments an MDP with a multi-action sample transition model that specifies a coupling of one-step counterfactual outcomes. 
\item We focus exclusively on \textit{policy evaluation}: For a fixed policy $\pi$, we derive moment Bellman operators up to arbitrary finite order and provide both dynamic programming (DP) and incremental estimation algorithms with convergence guarantees, along with computable Bellman residual certificates \citep{scherrer2010should}.
\item \textbf{One-step coupling regime.} We adopt a local one-step coupling regime. Branches that query the same state share the current outcome table, while distinct successor states use independent fresh tables. The Bellman recursion then propagates mixed moments over successor pairs without constructing fully coupled counterfactual trajectory trees.
\end{itemize}

\section{Related Works}\label{sec:related}
\textbf{MDP formalism and generative interfaces.} In the standard MDP formalism, the environment is specified by a per-action reward law and a per-action transition kernel \citep{puterman, BartoSutton, DRL-textbook}. Interaction reveals only the reward and next state of the executed action. This is sufficient for objectives depending on \textit{marginal} state-action return laws (e.g., expected value or a single-action return distribution), but it does not specify a joint law over counterfactual one-step outcomes across actions. We formalize a setting in which the environment exposes a \textit{multi-action generative interface} returning counterfactual one-step outcomes for multiple queried actions under shared exogenous randomness. Such shared-randomness interfaces are standard in common-random-number Monte Carlo (MC) comparison and scenario-based decision models
\citep{glasserman2004monte, shapiro2021lectures}.

\textbf{Distributional and multivariate RL}. DRL studies the law of the scalar return and Bellman operators on spaces of probability measures, beginning with the distributional perspective of \citet{c51} and subsequent practical approximations \citep{c51, qr-dqn, iqn, FQF, nguyen2021distributional}. These methods learn per-state action \textit{marginal} return distributions and do not define joint RVs $(Z^\pi(s, a))_{a \in \A}$. A complementary line studies multivariate return distributions induced by vector-valued rewards and cumulants, including Bellman-GAN \citep{bellman-gan}, MD3QN \citep{zhang2021distributional}, and the recent foundations and convergent DP/TD methods of \citet{wiltzer2024foundations}. Our setting is orthogonal: even with scalar rewards, we study joint structure across counterfactual actions induced by coupled-environment randomness, which is not captured by standard multivariate DRL formulations.

\textbf{Moment-based evaluation, risk sensitivity, and gap/advantage distributions.} Return-moment policy evaluation has a long history \citep{sobel}, including DP characterizations of variance and TD estimators for variance and related quantities \citep{mv-mdp, tamar2, estimating-variance}. Coherent risk measures such as CVaR have likewise been incorporated via risk-aware Bellman operators \citep{cvar-mdp}. These works remain within a single action's return law. By contrast, our target quantities are intrinsically joint across actions and require mixed moments encoding cross-action dependence. This distinction is especially visible for gap and advantage distributions: if only marginals are given, the law of $Z^\pi(s, a) - Z^\pi(s, \Tilde{a})$ depends on an unspecified coupling, motivating canonical-coupling axioms \citep{wiltzer2024action}. Our approach instead models environments in which the coupling is part of the environment itself. Under our one-step coupling regime, gap and advantage RVs are well-defined by construction, and their moments are computable from learned mixed moments.

\textbf{Counterfactual modeling perspectives.} The multi-action generative interface admits a counterfactual interpretation. At a state, the environment reveals multiple potential one-step outcomes under shared exogenous randomness. This connects to potential-outcome and structural causal model (SCM) perspectives in RL \citep{lu2020sample, amitai2024explaining}. We do not address identifiability or learning such an SCM from data, rather, assuming simulator access to counterfactual one-step queries, we ask what formalism and policy-evaluation theory are appropriate. A POMDP or SCM is natural when latent variables must be inferred from ordinary trajectories. Here, the shared-randomness counterfactual table is queryable, and even if a latent $U$ renders outcomes conditionally deterministic, the unconditional gap under a shared $U$ remains a joint object rather than a marginal one.

Overall, prior DRL and risk-sensitive RL methods either model marginal per-action return laws or multivariate returns arising from vector-valued rewards along a realized trajectory. Our work isolates a different axis of jointness: across actions at a fixed state, induced by coupled-environment randomness. We propose joint MDPs as a formalism and algorithmic basis for policy evaluation of both marginal and joint return moments, enabling principled treatment of intrinsically joint distributional quantities.

\section{Preliminaries}\label{sec:preliminaries}
\begin{definition}[MDP] 
An MDP is a quintuple $(\Ss, \allowbreak\A, \allowbreak P_R, \allowbreak P_S, \allowbreak \gamma)$, where $\Ss$ is a finite state space, $\A$ is a finite action space with $N = \abs{\A}$, $\gamma \in (0, 1)$ is the discount factor, $P_R(\cdot \mid s, a)$ is a reward kernel over $[0, 1]$ and $P_S(\cdot\mid s,a)$ is a transition kernel over $\Ss$.
\end{definition}

A policy is a Markov kernel $\pi(\cdot \mid s) \in \Delta(\A)$. For measurable spaces $(\Omega_i, \mathcal{F}_i)$, let $\mathcal{P}(\Omega_i)$ denote the set of probability measures over $\Omega_i$. An $m$-variate distribution $\nu \in \mathcal{P}(\Omega_1 \times \ldots \Omega_m)$ is a coupling of its marginals $\{ \nu_i\}_{i=1}^m$, where $\nu_i$ is the $i$th coordinate marginal. We write $[m]$ for $\{1, \ldots, m\}$. To conclude the section, let us recall a standard convention, used throughout RL, for describing the interaction with an environment via a generative model.

\begin{definition}[Sample transition model (STM) \citep{DRL-textbook}]\label{def:stm}
Let $\xi \in \Delta(\Ss)$ and $\pi$ be a policy. The STM is the joint distribution of $(S, A, R, S')$ generated by
\begin{equation}
\begin{gathered}
S \sim \xi, \quad A \mid S \sim \pi(\cdot \mid S), \\
R \mid S, A \sim P_R(\cdot \mid S, A), \quad
S' \mid S, A \sim P_S(\cdot \mid S, A).
\end{gathered}
\end{equation}
\end{definition}

\section{Coupled-dynamics environments and joint MDPs}
Many distributional quantities in RL are intrinsically joint across actions at a fixed state, including gaps and advantages, probabilities of superiority, and tail functionals of such quantities. Formalizing such quantities requires a joint law for \textit{counterfactual} one-step outcomes that would occur under different actions taken from the same state, under the same realization of exogenous randomness. The MDP formalism does not account for this, since it only specifies marginal one-step laws for each action and leaves the joint counterfactual structure unspecified. Consequently, environments that agree on all one-step marginals when modeled using an MDP may still disagree on joint-action quantities.

We now present a multi-action variant of the STM, the $m$-joint STM ($m$-JSTM). Informally, an $m$-JSTM allows the learner to query $m$ distinct actions at the same state, and obtain $m$ coupled counterfactual one-step outcomes.
\begin{definition}[$m$-JSTM]\label{def:jstm} For an integer $m\ge 1$, an $m$-JSTM is a kernel 
\begin{equation}
\begin{gathered}
\mathcal{J}_m(\cdot \mid s, a_{1:m}) \in \mathcal{P}\big(([0, 1] \times \Ss)^m\big),\\ s\in\Ss, a_{1:m}
\in \A^m \text{ distinct},
\end{gathered}
\end{equation}
such that each coordinate marginal matches the classical one-step marginals. Concretely, if $((R_i, S_i'))_{i=1}^m \sim \mathcal{J
}_m(\cdot \mid s, a_{1:m})$, then for each coordinate $i \in [m]$,
\begin{equation}
\begin{gathered}
(R_i, S'_i) \sim \big(P_R(\cdot \mid s, a_i), P_S(\cdot \mid s, a_i) \big).
\end{gathered}
\end{equation}
\end{definition}

Specifically, when $m =N$ and $a_{1:N}$ is a fixed ordering of $\A$, $\mathcal{J}_m(\cdot \mid s, a_{1:N})$ specifies a joint coupling across all action at $s$. The additional expressiveness provided by an $m$-JSTM is the \textit{dependence} it specifies across counterfactual outcomes for different actions. When this dependence is nontrivial, the MDP formalism omits information that is directly relevant for joint distributional questions. This motivates treating the coupling as part of the model.
\begin{definition}[Coupled-dynamics environment]
An environment with marginal kernels $(P_R, P_S)$ is a \textit{coupled-dynamics} environment if it admits an $m$-JSTM $\mathcal{J}_m$ for some $m \ge 2$ such that for some query $(s, a_{1:m})$, $\mathcal{J}_m(\cdot \mid s, a_{1:m})$ is \textbf{not} the product coupling of its marginals. Equivalently, the environment contains joint one-step counterfactual structure that is not determined by $(P_R, P_S)$ alone, and is therefore lost under a purely marginal MDP description.
\end{definition}

The following examples illustrate the issue: Identical MDP marginals can conceal different joint structures.
\begin{example}
Let $\A = \{1, 2\}$ and fix $s \in \Ss$. Let $U \sim \mathrm{Bernoulli}(1/2)$, and let the counterfactual rewards be $R^{(1)} = U$, $R^{(2)} = 1- U$. Each action's marginal reward law is $\mathrm{Bernoulli}(1/2)$, while their joint law is perfectly anti-correlated. $\mathbb{P}(R^{(1)} > R^{(2)}) = 1/2$ and $\E[R^{(1)}R^{(2)}]=0$, neither fact determined by the marginals alone.
\end{example}

\begin{example}
Let $\Ss = \{0, 1\}$, $\A = \{1, 2\}$, and fix $s \in \Ss$. Let $U \sim \mathrm{Bernoulli}(1/2)$, and the counterfactual next states be
\begin{enumerate}[leftmargin=*, itemsep=2pt,align=left]
\item[(2.1)] $S'^{(1)}= S'^{(2)} =U$. Each action's marginal transition law is uniform on $\Ss$, while $S'^{(1)}= S'^{(2)}$.
\item[(2.2)] $S'^{(1)} = U$ and $S'^{(2)}= 1-U$. Each action's marginal transition law is uniform on $\Ss$, while $S'^{(1)}=1- S'^{(2)}$ .
\end{enumerate}
\end{example}

We now formalize a joint extension of the MDP model, appropriate for modeling coupled-dynamics environments.
\begin{definition}[JMDP]
A \textit{JMDP} is a quadruple $(\Ss, \allowbreak\A, \allowbreak\gamma, \allowbreak\mathcal{J})$, where $\Ss$ is a finite state space, $\A$ is a finite action space with $N = \abs{\A}$, $\gamma \in (0, 1)$ is the discount factor, and $\mathcal{J}(\cdot \mid s) \in \mathcal{P}(([0, 1] \times \Ss)^N)$ is a Markov kernel over counterfactual one-step outcome tables $( (R^{(a)}, S'^{(a)}))_{a \in \A}$. 

At time $t$, conditional on $S_t = s$, the environment samples a one-step outcome table $((R^{(a)}, S'^{(a)}))_{a \in \A} \sim \mathcal{J}(\cdot \mid s)$. The agent selects action $A_t$ to execute, the realized transition is given by the executed action: $(R_t, S_{t+1}) = (R_t^{(A_t)}, S_{t+1}^{(A_t)}).$ The remaining coordinates of the sampled outcome table are counterfactual and do not affect the realized trajectory.
\end{definition}
\begin{remark}
Every JMDP induces a marginal MDP description of the environment when we take coordinate marginals of the kernel $\mathcal{J}$. To see this, consider the collection of $1$-JSTMs $\mathcal{J}_1(\cdot \mid s, a)$ for each $(s, a)$ such that
\begin{equation}
\begin{gathered}
\mathcal{J}_{1}(\cdot\mid s, a):=\Law\big( (R^{(a)}, S'^{(a)})\big), \\
\text{where } \big((R^{(b)}, S'^{(b)}) \big)_{b \in \A} \sim \mathcal{J}(\cdot\mid s).
\end{gathered}
\end{equation}
Then, the reward and transition marginals $P_R(\cdot \mid s, a)$ and $P_S(\cdot \mid s, a)$ of $\mathcal{J}_{1}$ will constitute the reward and transition kernels of a classical MDP description of the environment.

In a similar vein, fixing $m \ge 2$, for any state $s$ and any $m$-tuple of distinct actions $a_{1:m}$, we can induce an $m$-JSTM $\mathcal{J}_m$ as the joint law of the $m$ queried coordinates:
\end{remark}
\begin{equation}
\begin{gathered}
\mathcal{J}_{m}(\cdot\mid s, a_{1:m}):=\Law\big( (R^{(a)}, S'^{(a)})_{a \in a_{1:m}}\big), \\
\text{where } \big((R^{(b)}, S'^{(b)}) \big)_{b \in \A} \sim \mathcal{J}(\cdot\mid s).
\end{gathered}
\end{equation}

To summarize, a coupled-dynamics environment is one where the marginal description of an MDP formalism does not determine the joint law of counterfactual one-step outcomes across actions. JMDPs model the missing structure via a kernel $\mathcal{J}(\cdot \mid s)$ over counterfactual outcome tables, from which the standard MDP interaction model STM and the multi-action $m$-JSTM interface are obtained by marginalization. The JMDP formalism retains the dependence information discarded by an MDP description of the environment, essential for joint distributional questions.

The JMDP definition specifies how a counterfactual one-step outcome table is sampled at each visited state, and how a single executed action generates the realized transition. Before developing policy evaluation algorithms for joint quantities, it remains to formalize what we have referred to as the \textit{one-step coupling} regime. The coupling is local in time, i.e., the current table shares exogenous randomness across actions at the queried state, and the next step samples fresh tables. If two coordinates later query the same successor state, they are coupled again through that local table. If they are at distinct states, their one-step randomness is independent. This matches multi-action simulators while also avoiding fully-coupled, exponentially-expanding counterfactual trajectory trees. 

\begin{assumption}[Exogenous noise representation \citep{bertsekas1996stochastic, ng2013pegasus}]\label{ass:exo}
We assume that there exist an i.i.d. exogenous sequence $(U_t)_{t\ge0}$ over a measurable space $\mathcal{U}$, and measurable maps
\begin{equation}
\begin{gathered}
g: \Ss \times \A \times \mathcal{U} \to [0, 1], \quad
h :\Ss \times \A \times \mathcal{U} \to \Ss,
\end{gathered}
\end{equation}
such that at any time $t$ and state $s$, the counterfactual one-step outcomes for any action $a \in \A$ are related by 
\begin{equation}
\begin{gathered}
R^{(a)}_t = g(s, a, U_t), \quad S^{(a)}_{t+1} = h(s, a, U_t).
\end{gathered}
\end{equation}
Thus, the full outcome table sampled by the JMDP kernel can alternatively be expressed as
\begin{equation}
\big((R_t^{(a)}, S^{(a)}_{t+1}) \big)_{a\in \A} = \big((g(S_t, a, U_t), h(S_t, a, U_t))\big)_{a\in \A}.
\end{equation}
\end{assumption}
Assumption~\ref{ass:exo} means that after the transition to successor states after time $t$, subsequent one-step outcome tables are generated using fresh exogenous variables, independent of $U_t$. Dependence is therefore mediated by local table draws and not by a single latent variable which is shared for an entire counterfactual tree.

We now define the main object of the rest of our inquiry.
\begin{definition}[Joint return vector]
Let $\pi$ be a stationary policy. For any state $s$, we define the \textit{joint return vector} belonging to $\pi$ as $Z^\pi(s) := ( Z^\pi(s, a))_{a \in \A}$.
\end{definition}
Under the one-step coupling regime, the components of $Z^\pi(s)$ are coupled through the shared outcome table at the initial state $s$, and then recursively through the pair process over successor states.

\section{Joint iterative policy evaluation}

We now develop DP and incremental policy evaluation schemes for JMDPs. Throughout this section, we fix a policy $\pi$, and the goal is to estimate joint moments of $Z^\pi(s)$, including mixed moments.  We start with the case of moments up to $2$nd order, displaying an approachable and interpretable instance of the general case of $n$th-order moments. Proofs are deferred to Appendix \ref{app:proofs}.

Let $\X:=\Ss \times \A$ denote the state-action space. For each $(s, a) \in \X$, we define the first moments
\begin{equation}
\mu^\pi(s, a) := \E\left[Z^\pi(s, a)\right],
\end{equation}

and for each $(s, a, \Tilde{s}, \Tilde{a}) \in \X^2$, the second moments
\begin{equation}
\Sigma^\pi(s, a, \Tilde{s}, \Tilde{a}) := \E\left[Z^\pi(s, a)\,Z^\pi(\Tilde{s}, \Tilde{a})\right].
\end{equation}

The algorithms defined below will operate on candidate moment collections $M = (M_\mu, M_\Sigma)$, where
\begin{equation}
\begin{gathered}
M_\mu\in \{f:\X \to \R\}, \quad
M_\Sigma \in \{f(s, a, \Tilde{s},\Tilde{a}): \X^2 \to \R\}.
\end{gathered}
\end{equation}
Thus, $M_\mu(s, a)$ will be a candidate value for $\mu^\pi(s,a)$, and $M_\Sigma(s, a, \Tilde{s}, \Tilde{a})$ for $\Sigma^\pi(s, a, \Tilde{s}, \Tilde{a})$. The introduced algorithms will map any such dimensionally consistent moment collection $M$ to the true collection of moments $M^\pi_2:=(\mu^\pi, \Sigma^\pi)$.

We now define the $2$nd-order joint Bellman operator.

\begin{definition}[$2$nd-order joint Bellman operator] The $2$nd-order joint Bellman operator $T^\pi_2$ maps a moment collection $M$ to $T^\pi_2M$ coordinatewise through the equations
\begin{equation}\label{eq:Tmean}
\begin{gathered}
(T^\pi_2M)_\mu(s, a) := \E\big[ R^{(a)} + \gamma M_\mu\big(S'^{(a)}, A'^{(a)}\big) \mid S= s\big]
\end{gathered}
\end{equation}
for each $(s, a)$. Furthermore, for each $(s, a, \Tilde{s},\Tilde{a})$,
\begin{equation}\label{eq:Tcov}
\begin{gathered}
(T^\pi_2 M)_\Sigma(s, a, \Tilde{s}, \Tilde{a}) := \E \big[R^{(a)}\Tilde{R}^{(\Tilde{a})}\\+\gamma R^{(a)} M_\mu\big(\Tilde{S}'^{(\Tilde{a})},\Tilde{A}'^{(\Tilde{a})}\big)+  \gamma \Tilde{R}^{(\Tilde{a})} M_\mu\big(S'^{(a)},A'^{(a)}\big) \\+ \gamma^2 M_\Sigma\big(S'^{(a)}, A'^{(a)}, \Tilde{S}'^{(\Tilde{a})}, \Tilde{A}'^{(\Tilde{a})}\big)  \mid (S, \Tilde{S}) = (s, \Tilde{s}) \big].
\end{gathered}
\end{equation}
\end{definition}
It is instructional to consider the joint laws with respect to which we take the expectation in these equations. In \eqref{eq:Tmean}, this is the joint law of $(R^{(a)}, S'^{(a)}, A'^{(a)})$, where
\begin{equation}
(R^{(a)}, S'^{(a)}) \sim J_1(\cdot \mid s, a),\quad A'^{(a)} \sim \pi(\cdot\mid S'^{(a)}),
\end{equation}
where $\mathcal{J}_1$ is a $1$-JSTM induced from the JMDP kernel $\mathcal{J}$.

This is also the case in the ``diagonal'' scenario of \eqref{eq:Tcov} ($s = \Tilde{s}$ and $a = \Tilde{a}$), where the expression simplifies to
\begin{equation}
\begin{gathered}
(T^\pi_2 M)_\Sigma(s, a, s, a) \\ = \E\big[\big( R^{(a)}\big)^2+2\gamma R^{(a)}M_\mu\big(S'^{(a)}, A'^{(a)}\big) \\+ \gamma^2 M_\Sigma\big(S'^{(a)}, A'^{(a)}, S'^{(a)}, A'^{(a)}\big) \mid S = s \big],
\end{gathered}
\end{equation} 

When $a\ne \Tilde{a}$, the joint law is of $(R^{(a)},\allowbreak S'^{(a)},\allowbreak A'^{(a)},\allowbreak \Tilde{R}^{(\Tilde{a})},\allowbreak \Tilde{S}'^{(\Tilde{a})}, \Tilde{A}'^{(\Tilde{a})})$. When we have $s \ne \Tilde{s}$,
\begin{equation}
\begin{gathered}
(R^{(a)}, S'^{(a)}) \sim \mathcal{J}_1(\cdot \mid s, a), \quad (\Tilde{R}^{(\Tilde{a})}, \Tilde{S}'^{(\Tilde{a})}) \sim \mathcal{J}_1(\cdot \mid \Tilde{s}, \Tilde{a}),\\
A'^{(a)} \sim \pi (\cdot \mid S'^{(a)}), \quad \Tilde{A}'^{(\Tilde{a})} \sim \pi(\cdot \mid \Tilde{S}'^{(\Tilde{a})}).
\end{gathered}
\end{equation}
$\mathcal{J}_1$ is a $1$-JSTM induced from $\mathcal{J}$, the draws are independent.

Crucially, in the ``same-state'' case where $s = \Tilde{s}$, we have
\begin{equation}
\begin{gathered}
\big(R^{(a)}, S'^{(a)}, \Tilde{R}^{(\Tilde{a})}, \Tilde{S}'^{(\Tilde{a})}\big) \sim \mathcal{J}_2\big(\cdot \mid s, (a, \Tilde{a})\big),\\
A'^{(a)} \sim \pi (\cdot \mid S'^{(a)}), \quad \Tilde{A}'^{(\Tilde{a})} \sim \pi(\cdot \mid \Tilde{S}'^{(\Tilde{a})}),
\end{gathered}
\end{equation}
where $\mathcal{J}_2$ is a $2$-JSTM induced from $\mathcal{J}$.

Note that, by construction, $M^{\pi}_2$ is a fixed point of $T^\pi_2$, since
\begin{equation}
\begin{gathered}
\mu^\pi(s, a) = \E \big[R^{(a)}+\gamma Z^\pi\big(S'^{(a)},A'^{(a)}\big) \mid S = s \big],\\
\Sigma^\pi(s, a, \Tilde{s},\Tilde{a}) = \E \big[\big(R^{(a)}+\gamma Z^\pi\big(S'^{(a)},A'^{(a)}\big)\big) \\
\cdot\big(\Tilde{R}^{(\Tilde{a})}+\gamma Z^\pi\big(\Tilde{S}'^{(\Tilde{a})},\Tilde{A}'^{(\Tilde{a})}\big)\big)\mid (S, \Tilde{S}) = (s, \Tilde{s}) \big].
\end{gathered}
\end{equation}

We now prove that $T^\pi_2$ is a contraction mapping.

\begin{lemma}\label{lem:2contraction}
Let $\lambda := 2/(1-\gamma)$. For any moment collection $M$, define the norm
\begin{equation}
\norm{M}_{\lambda} := \max \left\{\norm{M_\mu}_\infty, \frac{1}{\lambda}\norm{M_\Sigma}_\infty \right\}.
\end{equation}
Then, $T^\pi_2$ is a $\gamma$-contraction in $\norm{\,\cdot\,}_{\lambda}$.
\end{lemma}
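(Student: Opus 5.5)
Take two moment collections $M, M'$ and set $D_\mu := M_\mu - M'_\mu$ and $D_\Sigma := M_\Sigma - M'_\Sigma$. Since $T^\pi_2$ is affine in $M$, the reward-only terms drop out of the differences. In particular the term $\E[R^{(a)}\Tilde{R}^{(\Tilde{a})}]$, which is the only one involving the (possibly coupled) joint law, cancels. So only linear functionals of $D_\mu$ and $D_\Sigma$ remain, evaluated under the relevant joint law: $\mathcal{J}_1$ in \eqref{eq:Tmean}, and for \eqref{eq:Tcov} either $\mathcal{J}_2$ or the independent product, depending on whether $s=\Tilde{s}$. The coupling never enters the bounds, because every estimate below uses only $\abs{R}\le 1$ and sup-norms. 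This makes the three cases of \eqref{eq:Tcov} uniform.

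\emph{Mean component.} We have $(T^\pi_2M)_\mu(s,a)-(T^\pi_2M')_\mu(s,a)=\gamma\,\E[D_\mu(S'^{(a)},A'^{(a)})\mid S=s]$. Its absolute value is at most $\gamma\norm{D_\mu}_\infty\le\gamma\norm{M-M'}_\lambda$.

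\emph{Second-moment component.} The difference equals
\begin{equation*}
\E\big[\gamma R^{(a)}D_\mu(\Tilde{S}'^{(\Tilde{a})},\Tilde{A}'^{(\Tilde{a})})+\gamma\Tilde{R}^{(\Tilde{a})}D_\mu(S'^{(a)},A'^{(a)})+\gamma^2 D_\Sigma(S'^{(a)},A'^{(a)},\Tilde{S}'^{(\Tilde{a})},\Tilde{A}'^{(\Tilde{a})})\big].
\end{equation*}
Using rewards in $[0,1]$, its absolute value is at most $2\gamma\norm{D_\mu}_\infty+\gamma^2\norm{D_\Sigma}_\infty$. Write $\delta:=\norm{M-M'}_\lambda$, so that $\norm{D_\mu}_\infty\le\delta$ and $\norm{D_\Sigma}_\infty\le\lambda\delta$. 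Dividing by $\lambda$ gives
\begin{equation*}
\frac{1}{\lambda}\norm{(T^\pi_2M-T^\pi_2M')_\Sigma}_\infty\le\Big(\frac{2\gamma}{\lambda}+\gamma^2\Big)\delta=\big(\gamma(1-\gamma)+\gamma^2\big)\delta=\gamma\delta.
\end{equation*}
Here the choice $\lambda=2/(1-\gamma)$ is used: it makes the constant exactly $\gamma$. Taking the maximum of the two component bounds gives $\norm{T^\pi_2M-T^\pi_2M'}_\lambda\le\gamma\norm{M-M'}_\lambda$.

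The main obstacle is the cross terms coupling $D_\mu$ into the $\Sigma$-component. This is exactly why a weighted norm is needed rather than a plain sup norm. The step to get right is the calibration of $\lambda$ so that $2\gamma/\lambda+\gamma^2\le\gamma$, i.e.\ $\lambda\ge 2/(1-\gamma)$. Beyond that, the only checks are that $\norm{\cdot}_\lambda$ is a genuine norm on the finite-dimensional space of collections, and that the conditional expectations are well defined for bounded $M$. Both are immediate because $\Ss$ and $\A$ are finite.
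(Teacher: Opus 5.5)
Your proposal is correct and follows essentially the same argument as the paper's proof. You bound the $\mu$-component difference by $\gamma\norm{(M-M')_\mu}_\infty$ and the $\Sigma$-component difference by $2\gamma\norm{(M-M')_\mu}_\infty+\gamma^2\norm{(M-M')_\Sigma}_\infty$ using rewards in $[0,1]$, then use $\lambda=2/(1-\gamma)$ so that $2\gamma/\lambda+\gamma^2=\gamma$; your remark that the coupling (whether $\mathcal{J}_1$, $\mathcal{J}_2$, or the product law) never enters these estimates is a correct clarification that the paper leaves implicit.
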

We now consider the iterative process expressed by 
\begin{equation}\label{eq:jipe2eq}
M_{k+1} = T^\pi_2 M_k.
\end{equation}
\begin{theorem}\label{thm:jipe2}
$T^\pi_2$ admits a unique fixed point $M^\pi_2$. Moreover, for any initialization $M_0$, the iteration \eqref{eq:jipe2eq} has geometric convergence with rate $\gamma$ in $\norm{\,\cdot\,}_{\lambda}$:
\begin{equation}
\LVert M_k - M^\pi_2\RVert_{\lambda} \le \gamma^k\LVert M_0 - M^{\pi}_2 \RVert_{\lambda}.
\end{equation}
Furthermore, for any moment collection $M$, define the $2$nd-order Bellman residual as $\LVert M - T^\pi_2 M\RVert_{\lambda}$.
Then,
\begin{equation}
\LVert M - M^\pi_2 \RVert_{\lambda} \le \frac{1}{1-\gamma}\LVert M - T^\pi_2 M\RVert_{\lambda}.
\end{equation}
\end{theorem}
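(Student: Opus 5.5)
The proof will be a direct application of the Banach fixed-point theorem, with Lemma~\ref{lem:2contraction} supplying the contraction property.

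\textbf{Completeness of the space.} The space of moment collections $M=(M_\mu,M_\Sigma)$ is the finite-dimensional real vector space $\R^{\X}\times\R^{\X^2}$, since $\Ss$ and $\A$ are finite. The quantity $\norm{\cdot}_\lambda$ is a weighted maximum of sup-norms with $\lambda>0$, so it is a norm. Every norm on a finite-dimensional space makes it a Banach space, so the space is complete under $\norm{\cdot}_\lambda$.

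\textbf{Uniqueness of the fixed point.} By Lemma~\ref{lem:2contraction}, $T^\pi_2$ is a $\gamma$-contraction on this space. Banach's theorem therefore gives a unique fixed point.

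\textbf{Identifying the fixed point.} We next check that this fixed point is $M^\pi_2$. Rewards lie in $[0,1]$, so $0\le Z^\pi\le 1/(1-\gamma)$, and hence $\mu^\pi$ and $\Sigma^\pi$ are finite and well defined. The paper already observed that $M^\pi_2$ satisfies $M^\pi_2=T^\pi_2M^\pi_2$. This follows from the recursion $Z^\pi(s,a)=R^{(a)}+\gamma Z^\pi(S'^{(a)},A'^{(a)})$ under the one-step coupling regime. Expanding the product in the second-moment identity gives four terms. The cross terms reduce to $M_\mu$ through the tower property: conditional on the current table and the successor pair, the continuation return $Z^\pi(\Tilde S'^{(\Tilde a)},\Tilde A'^{(\Tilde a)})$ depends only on fresh exogenous noise, which is independent of $U_t$ by Assumption~\ref{ass:exo}. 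Its conditional mean is therefore $\mu^\pi(\Tilde S'^{(\Tilde a)},\Tilde A'^{(\Tilde a)})$, and the cross term becomes $R^{(a)}\,\mu^\pi(\Tilde S'^{(\Tilde a)},\Tilde A'^{(\Tilde a)})$. By the same reasoning, the term $\gamma^2 Z^\pi(S'^{(a)},A'^{(a)})\,Z^\pi(\Tilde S'^{(\Tilde a)},\Tilde A'^{(\Tilde a)})$ becomes $\gamma^2\Sigma^\pi$ evaluated at the successor pair. Hence the unique fixed point equals $M^\pi_2$.

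\textbf{Geometric rate.} Using $M^\pi_2=T^\pi_2M^\pi_2$ and induction,
\begin{equation*}
\norm{M_{k+1}-M^\pi_2}_\lambda=\norm{T^\pi_2M_k-T^\pi_2M^\pi_2}_\lambda\le\gamma\norm{M_k-M^\pi_2}_\lambda ,
\end{equation*}
which yields $\norm{M_k-M^\pi_2}_\lambda\le\gamma^k\norm{M_0-M^\pi_2}_\lambda$.

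\textbf{Residual bound.} By the triangle inequality and the contraction property,
\begin{equation*}
\norm{M-M^\pi_2}_\lambda\le\norm{M-T^\pi_2M}_\lambda+\norm{T^\pi_2M-T^\pi_2M^\pi_2}_\lambda\le\norm{M-T^\pi_2M}_\lambda+\gamma\norm{M-M^\pi_2}_\lambda .
\end{equation*}
Since the left side is finite, we can rearrange to obtain $\norm{M-M^\pi_2}_\lambda\le\frac{1}{1-\gamma}\norm{M-T^\pi_2M}_\lambda$.

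\textbf{Main obstacle.} All contraction estimates are taken from Lemma~\ref{lem:2contraction}. The only delicate step is the fixed-point identification: one must justify that the continuation returns from the two successor branches have mixed second moment $\Sigma^\pi$ evaluated at the successor pair, with the same coupling convention (shared table if the states coincide, independent tables otherwise). This is exactly how the one-step coupling regime defines the pair process. I would argue it by conditioning on $\sigma(U_t, S'^{(a)}, \Tilde S'^{(\Tilde a)}, A'^{(a)}, \Tilde A'^{(\Tilde a)})$ and invoking the fresh-noise independence in Assumption~\ref{ass:exo}.
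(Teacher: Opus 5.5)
Your proposal is correct and follows the paper's proof. Banach's fixed-point theorem applied to Lemma~\ref{lem:2contraction} gives uniqueness and the $\gamma^k$ rate, and the residual bound comes from the same triangle-inequality-plus-contraction rearrangement. Beyond the paper, you spell out completeness of the finite-dimensional space and the check $M^\pi_2=T^\pi_2M^\pi_2$, which the paper simply takes to hold ``by construction''. As in the paper, that check implicitly needs the pair process to treat a diagonal successor pair $(x',x')$ as a single coalesced return, so that the continuation mixed moment is $\Sigma^\pi(x',x')=\E[Z^\pi(x')^2]$; this is automatic for a deterministic $\pi$ but is a convention when the two branches draw their next actions independently.
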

This enables the interpretation of \eqref{eq:jipe2eq} as an algorithm with a computable stopping criterion for $\epsilon$ accuracy in $\norm{\,\cdot\,}_{\lambda}$, which we call $2$nd-order joint iterative policy evaluation (JIPE-$2$).

\subsection{JIPE-2 with Function Approximation}
In high-dimensional or continuous state spaces, exact tabular JIPE-$2$ becomes impractical, so it is preferable to move to a function-approximation setting in which the moments are restricted to a parameterized $\hat{M}(\theta)$, with a PSD-structured parameterization for $M_\Sigma$ to preserve valid $2$nd moment geometry. The resulting objective is a projected fixed point equation of the form $\hat{M}(\theta^*) = \Pi T^\pi_2 \hat{M}(\theta^*)$. This yields a projected JIPE-$2$ iteration with a standard projection for the mean term and a PSD-constrained projection for the $2$nd-moment term. Under standard regularity conditions, Appendix \ref{app:funcapprox} shows that the projected operator is contractive in a suitable weighted norm, giving existence and uniqueness of the projected fixed point, convergence of the iteration, and a standard approximation-error bound.
\subsection{Incremental JIPE-2}

We next give an incremental variant of JIPE-$2$, suitable when $\mathcal{J}$ is accessed only through samples from its induced $1$- and $2$-JSTMs. The method is an instance of stochastic approximation for the operator $T^\pi_2$. Let us define an index set to simplify indexing a moment collection $M = (M_\mu, M_\Sigma)$:
\begin{equation}
\mathcal{I}_2 := \X \times \{ \mu\}\cup \X^2 \times \{\Sigma\}.
\end{equation}

For $i = ((s, a), \mu)$, we write $M(i) = M_\mu(s, a)$, and for $i = ((s, a), (\Tilde{s}, \Tilde{a}), \Sigma)$, we write $M(i) = M_\Sigma(s, a, \Tilde{s}, \Tilde{a})$.

We define the random one-sample backup $\hat{T}^\pi_2(M,i)$:
\begin{enumerate}[leftmargin=*, itemsep=0pt, align=left]
\item[(i)] If $i = ((s, a), \mu)$, we draw $(R^{(a)}, S'^{(a)}) \sim \mathcal{J}_1(\cdot \mid s, a)$ from the induced $1$-JSTM, $A'^{(a)} \sim \pi(\cdot \mid S'^{(a)})$. We set
\begin{equation}
\hat{T}^\pi_2 (M, i) := R^{(a)} + \gamma M_\mu\big(S'^{(a)}, A'^{(a)}\big).
\end{equation}
\item[(ii)] If $i = ((s, a),(s,a)), \Sigma)$, we draw $(R^{(a)}, S'^{(a)}) \sim \mathcal{J}_1(\cdot \mid s, a)$ from the induced $1$-JSTM and $
A'^{(a)} \sim \pi (\cdot \mid S'^{(a)})$. We set
\begin{equation}
\begin{gathered}
\hat{T}^\pi_2 (M, i) :=\big( R^{(a)}\big)^2+2\gamma R^{(a)}M_\mu\big(S'^{(a)}, A'^{(a)}\big) \\+ \gamma^2 M_\Sigma\big(S'^{(a)}, A'^{(a)}, S'^{(a)}, A'^{(a)}\big).
\end{gathered}
\end{equation}
\item[(iii)] If $i = ((s,a), (\Tilde{s}, \Tilde{a}), \Sigma)$, with $a \ne \Tilde{a}$ but $s= \Tilde{s}$, we draw $(R^{(a)}, S'^{(a)}, \Tilde{R}^{(\Tilde{a})}, \Tilde{S}'^{(\Tilde{a})}) \sim \mathcal{J}_2(\cdot \mid s, (a, \Tilde{a}))$ from the induced $2$-JSTM, $A'^{(a)} \sim \pi (\cdot \mid S'^{(a)})$, $  \Tilde{A}'^{(\Tilde{a})} \sim \pi(\cdot \mid \Tilde{S}'^{(\Tilde{a})})$. If $s \ne \Tilde{s}$, we independently draw $(R^{(a)}, S'^{(a)}) \sim \mathcal{J}_1(\cdot \mid s, a)$, $(\Tilde{R}^{(\Tilde{a})}, \Tilde{S}'^{(\Tilde{a})}) \sim \mathcal{J}_1(\cdot \mid \Tilde{s}, \Tilde{a})$ and $
A'^{(a)} \sim \pi (\cdot \mid S'^{(a)})$, $\Tilde{A}'^{(\Tilde{a})} \sim \pi(\cdot \mid \Tilde{S}'^{(\Tilde{a})})$. In either case, we set
\begin{equation}
\begin{gathered}
\hat{T}^\pi_2(M, i):=R^{(a)}\Tilde{R}^{(\Tilde{a})}\\+\gamma R^{(a)} M_\mu\big(\Tilde{S}'^{(\Tilde{a})},\Tilde{A}'^{(\Tilde{a})}\big)+  \gamma \Tilde{R}^{(\Tilde{a})} M_\mu\big(S'^{(a)},A'^{(a)}\big) \\+ \gamma^2 M_\Sigma\big(S'^{(a)}, A'^{(a)}, \Tilde{S}'^{(\Tilde{a})}, \Tilde{A}'^{(\Tilde{a})}\big).
\end{gathered}
\end{equation}
\end{enumerate}

For $\alpha_k \in (0, 1)$, we define the incremental JIPE-$2$ recursion:
\begin{equation}
\resizebox{.48\textwidth}{!}{$
M_{k+1}(i) = \begin{cases}
(1-\alpha_k(i))M_k(i)+\alpha_k(i)\hat{T}^\pi_2(M_k, i),&i=I_k,\\
M_k(i),&i \ne I_k,
\end{cases}
$}
\end{equation}
where $I_k$ designates the $k$th (random) index updated. The following theorem establishes almost-sure convergence.

\begin{theorem}\label{thm:jipe2-inc}
Assume each coordinate $i \in \mathcal{I}_2$ is selected infinitely often by $I_k$. Let the step size $\alpha_k(i)$ satisfy the conditions of \citet{robbins1951stochastic} for every index $i$:
\begin{equation}
\sum_k \alpha_k(i) = \infty,\quad \sum_k\alpha_k(i)^2 < \infty.
\end{equation} 
Then, the incremental JIPE-$2$ iteration converges almost surely in $\norm{\,\cdot\,}_\lambda$, i.e.,
\begin{equation}
\LVert M_k - M^\pi_2 \RVert_\lambda \to 0 \;\;\text{almost surely as} \; k \to \infty.
\end{equation}
\end{theorem}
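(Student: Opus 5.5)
We cast the incremental JIPE-$2$ recursion as an asynchronous stochastic approximation scheme for the operator $T^\pi_2$ and invoke the classical convergence theorem for contraction-based asynchronous iterations (in the form of \citet{bertsekas1996stochastic}, Prop.~4.4, or equivalently Jaakkola--Jordan--Singh / Tsitsiklis). Fix the natural filtration $\mathcal{F}_k$ generated by $M_0$, the indices $I_0,\dots,I_k$ and all samples drawn before step $k$. Write the update at $i=I_k$ as
\begin{equation*}
M_{k+1}(i) = (1-\alpha_k(i))M_k(i) + \alpha_k(i)\big[(T^\pi_2 M_k)(i) + w_k(i)\big],
\end{equation*}
with $w_k(i) := \hat{T}^\pi_2(M_k,i) - (T^\pi_2 M_k)(i)$. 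Set $\alpha_k(i)=0$ for $i\neq I_k$. With this convention the step-size conditions are required to hold along the update times of each coordinate, and the assumption that every $i\in\mathcal{I}_2$ is selected infinitely often makes this meaningful.

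\textbf{Step 1: unbiasedness.} We verify $\E[w_k(i)\mid \mathcal{F}_k, I_k=i]=0$ by checking each case of the one-sample backup against the definition of $T^\pi_2$. Case (i) uses the law of $(R^{(a)},S'^{(a)},A'^{(a)})$ appearing in \eqref{eq:Tmean}. Case (ii) uses the diagonal simplification. In case (iii), the same-state draw from $\mathcal{J}_2$ and the independent product draw for $s\ne\Tilde{s}$ match exactly the joint laws used in \eqref{eq:Tcov}. Since $M_k$ is $\mathcal{F}_k$-measurable and the samples are fresh, the conditional expectation equals $(T^\pi_2M_k)(i)$.

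\textbf{Step 2: conditional variance bound.} Rewards lie in $[0,1]$, and each backup is affine in finitely many entries of $M_k$ with coefficients bounded by $\max(2\gamma,\gamma^2)$. Hence $|\hat{T}^\pi_2(M_k,i)|\le C(1+\norm{M_k}_\infty)$, and so
\begin{equation*}
\E[w_k(i)^2\mid\mathcal{F}_k]\le A + B\norm{M_k}_\lambda^2
\end{equation*}
for constants $A,B$. Here we use that $\norm{\cdot}_\infty$ and $\norm{\cdot}_\lambda$ are equivalent on the finite index set $\mathcal{I}_2$.

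\textbf{Step 3: contraction.} By Lemma~\ref{lem:2contraction}, $T^\pi_2$ is a $\gamma$-contraction in the weighted max-norm $\norm{\cdot}_\lambda$, whose weights are $1$ on $\mu$-coordinates and $1/\lambda$ on $\Sigma$-coordinates. By Theorem~\ref{thm:jipe2}, its unique fixed point is $M^\pi_2$. These are precisely the hypotheses of the asynchronous SA theorem, which is stated for weighted max-norm contractions, and it yields $M_k\to M^\pi_2$ almost surely in every coordinate. Because $\mathcal{I}_2$ is finite, this is the same as convergence in $\norm{\cdot}_\lambda$.

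\textbf{Main obstacle.} The delicate point is that the cited theorem requires the noise bound of Step 2 and, in some versions, a priori boundedness of the iterates. The version in \citet{bertsekas1996stochastic} handles this internally: under a weighted max-norm contraction, it first establishes boundedness of $M_k$ using the linear-growth variance condition, and then convergence. So we only need the variance bound to grow at most quadratically in $\norm{M_k}$, which Step 2 provides. A second subtlety is measurability of $I_k$. We assume $I_k$ is $\mathcal{F}_k$-measurable, or at least independent of the current samples, so that the selection of an index does not bias the backup drawn for it.
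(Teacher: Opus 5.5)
Your proposal is correct and follows the paper's route: the paper also treats incremental JIPE-$2$ as asynchronous stochastic approximation for the $\gamma$-contraction $T^\pi_2$ in the weighted max-norm $\norm{\,\cdot\,}_\lambda$ (Lemma~\ref{lem:2contraction}, Theorem~\ref{thm:jipe2}), establishes unbiasedness and a conditional second-moment bound $C_0 + C_1\norm{M_k}_\lambda^2$ in its noise lemma (Lemma~\ref{lem:noises}), and then invokes the standard contraction-based convergence theorem. One cosmetic fix: in this paper's bibliography the key \texttt{bertsekas1996stochastic} is Bertsekas--Shreve, whereas the Proposition~4.4 you mean is in Bertsekas--Tsitsiklis, \emph{Neuro-Dynamic Programming}.
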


\subsection{Generalization to higher moments}

It is logically straightforward to extend the construction to the case of $n$th-order moments, with heavier notation. For each $k \in [n]$, we consider a $k$th-order moment collection
\begin{equation}
M^{(k)}:= \X^k \to \R,\quad M^{(k)}(x_{1:k}) = \E \left[ \prod_{i=1}^k Z^\pi(x_i)\right].
\end{equation}

The collection of all moments of orders $1$ through $n$ is $M :=(M^{(1)},\ldots,M^{(n)}).$

Let $k \le n$ and $x_{1:k} \in \X^k$. Let $\{G_l\}_{l=1}^L$ be the partition of $[k]$ into groups that share the same current state, i.e., $i, j \in G_l$ for some $l$ if and only if $s_i = s_j$. For each group $G_l$ with common state $s^{(l)}$ and action subtuple $a_{G_l} := (a_i)_{i\in G_l}$, let
\begin{equation}
\big((R_i, S'_i) \big)_{i \in G_l} \sim \mathcal{J}_{\abs{G_l}}(\cdot \mid s^{(l)}, a_{G_l})
\end{equation}
under the induced $\abs{G_l}$-JSTM. Assume these groupwise draws are independent across $l$. Conditioned on $(S'_i)_{i=1}^k$, let actions be drawn independently as $A'_i \sim \pi(\cdot \mid S'_i)$ for $i \in [k]$. We write $X'_I := (X'_i)_{i \in I}$ for any index set $I \subseteq [k]$. This construction is the direct $k$-variate analogue of the joint laws used in the $2$nd-order scenario. Within a state, the relevant coordinates share one coupled draw via induced JSTMs, whereas across distinct states, the one-step randomness is independent under the one-step coupling regime.

We define the operator $T^\pi_n$ through the system of equations
\begin{equation}
\begin{gathered}
(T^\pi_n M)_k(x_{1:k}) \\:= \E \left[\sum_{I \subseteq [k]} \gamma^{\abs{I}} \big( \prod_{i\notin I} R_i\big) M_{\abs{I}}(X'_I) \mid x_{1:k} \right],
\end{gathered}
\end{equation}

for each $k \in [n]$ and each $x_{1:k} \in \X^k$. We also define
\begin{equation}
\LVert M\RVert_\lambda := \max_{k \in [n]}\frac{1}{\lambda_k} \LVert M^{(k)}\RVert_\infty,
\end{equation}
for $\lambda_k := (2/(1-\gamma))^{k-1}$. The operator $T^\pi_n$ is a $\gamma$-contraction with respect to $\norm{\,\cdot\,}_\lambda$, and admits as unique fixed point the true collection of moments up to $n$th order, $M^\pi_n$. The JIPE-$n$ algorithm and its incremental variant can be defined analogously to the previous section, and admit similar results to those of Theorems \ref{thm:jipe2} and \ref{thm:jipe2-inc}. These higher-order moments may be of use when mean and variance miss skewed or heavy-tailed action gap risks, analogous to high-order portfolio objective which extend mean-variance criteria with higher moments \citep{mvsk}.

\subsection{Connection to the gap RV}

We now return to the discussion of the gap RV $G^\pi(s; a, \Tilde{a})$. Quantities that only involve the first moments require no joint modeling, for instance, $\E[G^\pi(s; a, \Tilde{a})] = \mu^\pi(s, a) - \mu^\pi(s, \Tilde{a})$ is already determined by the marginal dynamics. However, joint structure becomes relevant when we consider comparative or risk-sensitive questions about gaps. JMDPs specify a concrete coupling of counterfactual outcomes for $(a, \Tilde{a})$ at $s$, inducing a well-defined joint return law. The moment recursions of the previous section provide mixed moments $\E[Z^\pi(s, a)^i Z^\pi(s,\Tilde{a})^j]$, which determine the moments of $G^\pi(s; a, \Tilde{a})$ and enable moment-based bounds or approximations for comparative risk criteria. For example, $2$nd-order mixed moments yield the variance of the gap:
\begin{equation}
\begin{gathered}
\mathrm{var}\big(G^\pi(s;a,\Tilde{a})\big) = \Sigma^\pi(s,a,s,a) - \Sigma^\pi(s,\Tilde{a},s,\Tilde{a}) \\- 2\Sigma^\pi(s,a,s,\Tilde{a}) - \big(\mu^\pi(s, a) - \mu^\pi(s, \Tilde{a}) \big)^2,
\end{gathered}
\end{equation}
where all moments are computable through JIPE-$2$.

Similarly, if $\mu_G := \E[G^\pi(s; a, \Tilde{a})>\allowbreak 0]$, by the Chebyshev inequality \citep{boucheron2013} $\mathbb{P}(G^\pi(s; a, \Tilde{a}) \le\allowbreak 0 ) \le \allowbreak\sigma^2_G/(\sigma^2_G + \mu^2_G)$, where $\sigma^2_G \allowbreak := \mathrm{var}(G^\pi(s; a, \Tilde{a}))$.
\section{Experiments}

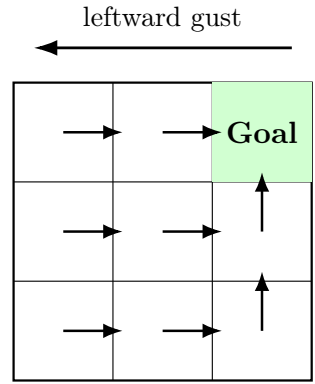
\begin{wrapfigure}{r}{0.15\textwidth}
\vspace{-5mm}   \centering\resizebox{0.15\textwidth}{!}{\begin{tikzpicture}[scale=1.25, every node/.style={font=\small}]
  \tikzset{pol/.style={-Latex, line width=0.9pt}}
  \tikzset{gust/.style={-{Latex[length=3mm,width=2mm]}, line width=1pt}}

  \draw[thick] (0,0) rectangle (3,3);
  \foreach \x in {1,2} \draw ( \x,0 ) -- ( \x,3 );
  \foreach \y in {1,2} \draw ( 0,\y ) -- ( 3,\y );

  \fill[green!18] (2,2) rectangle (3,3);
  \node[font=\bfseries] at (2.5,2.5) {Goal};

  \draw[pol] (0.5,0.5) -- ++(0.60,0); 
  \draw[pol] (1.5,0.5) -- ++(0.60,0); 
  \draw[pol] (2.5,0.5) -- ++(0,0.60); 
  \draw[pol] (0.5,1.5) -- ++(0.60,0); 
  \draw[pol] (1.5,1.5) -- ++(0.60,0);  
  \draw[pol] (2.5,1.5) -- ++(0,0.60); 
  \draw[pol] (0.5,2.5) -- ++(0.60,0); 
  \draw[pol] (1.5,2.5) -- ++(0.60,0); 

  \draw[gust] (2.8,3.35) -- (0.2,3.35);
  \node[above=2pt] at (1.5,3.35) {leftward gust};

\end{tikzpicture}}
    \caption{A $3\times3$ WGW environment and the evaluated policy.}
    \vspace{-5mm}
    \label{wgwpolicy}
\label{fig:wgw}
\end{wrapfigure}
We empirically validate the policy-evaluation theory developed in the previous sections in four complementary ways. First, in tabular coupled-dynamics environments we run JIPE-$2$ and track the Bellman residual $\norm{M - T^\pi_2 M}_\lambda$, which provides a certificate of accuracy. Second, we visualize the learned joint structure via return correlation matrices across actions, computed from the first and second moments. Third, we demonstrate that the learned mixed moments enable meaningful estimates and bounds for the gap RV, and validate these estimates against MC simulation. Finally, we show that the incremental JIPE-$2$ with neural function approximation scales beyond tabular settings in coupled ALE environments \cite{ALE}.

All experiments are policy evaluation: We fix $\pi$ and estimate the first moments $\mu^\pi(s,a)$ and second moments $\Sigma^\pi(s, a, \Tilde{s}, \Tilde{a})$. In tabular experiments, we apply the exact operator $T^\pi_2$ and report the Bellman residual. In large-scale experiments, we implement incremental JIPE-$2$ with neural approximation, and report moving-average TD errors.

\begin{figure}
  \centering
  \begin{minipage}{0.24\textwidth}
    \centering
    \includegraphics[width=\linewidth]{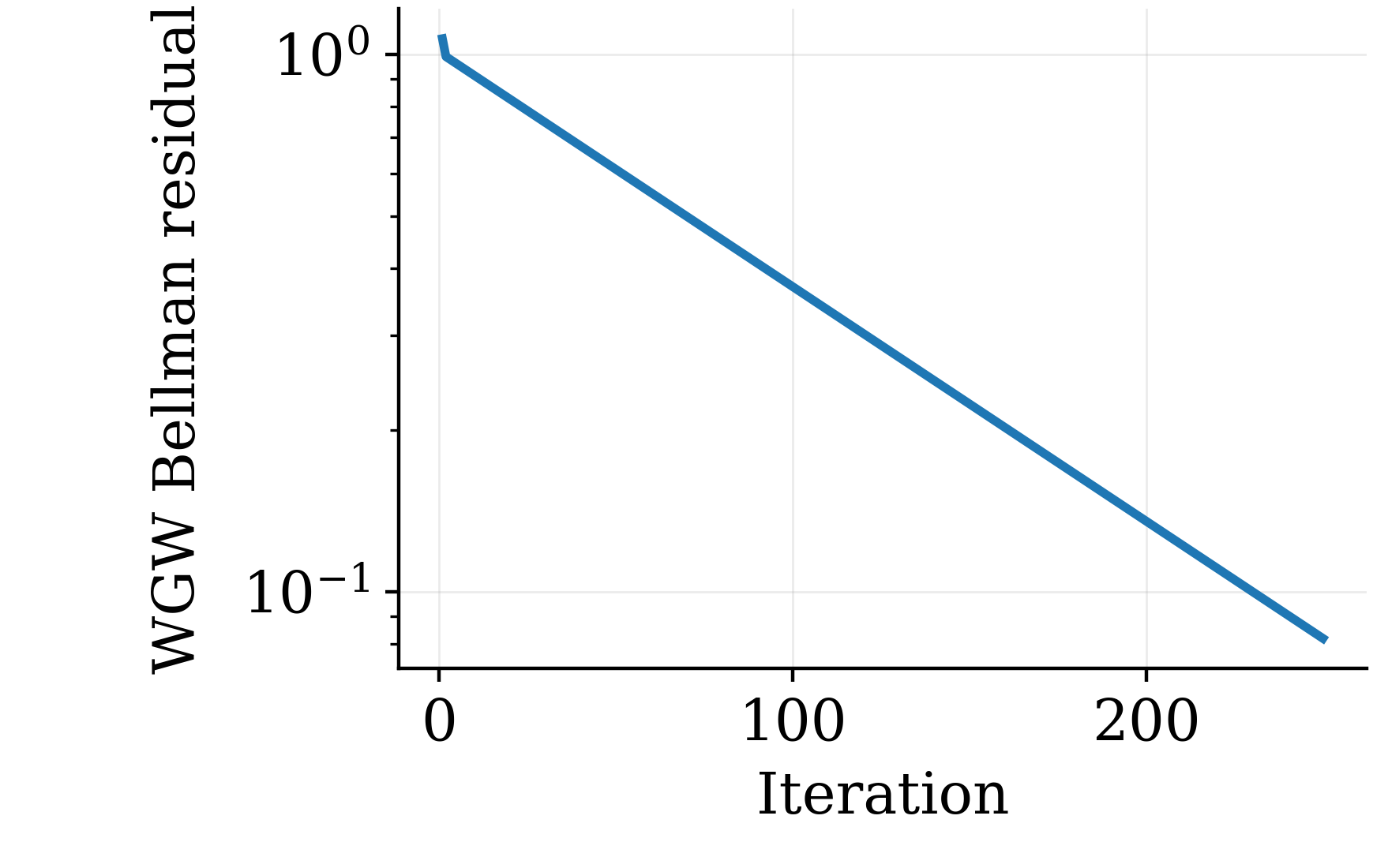}
  \end{minipage}\hfill
  \begin{minipage}{0.24\textwidth}
    \centering
    \includegraphics[width=\linewidth]{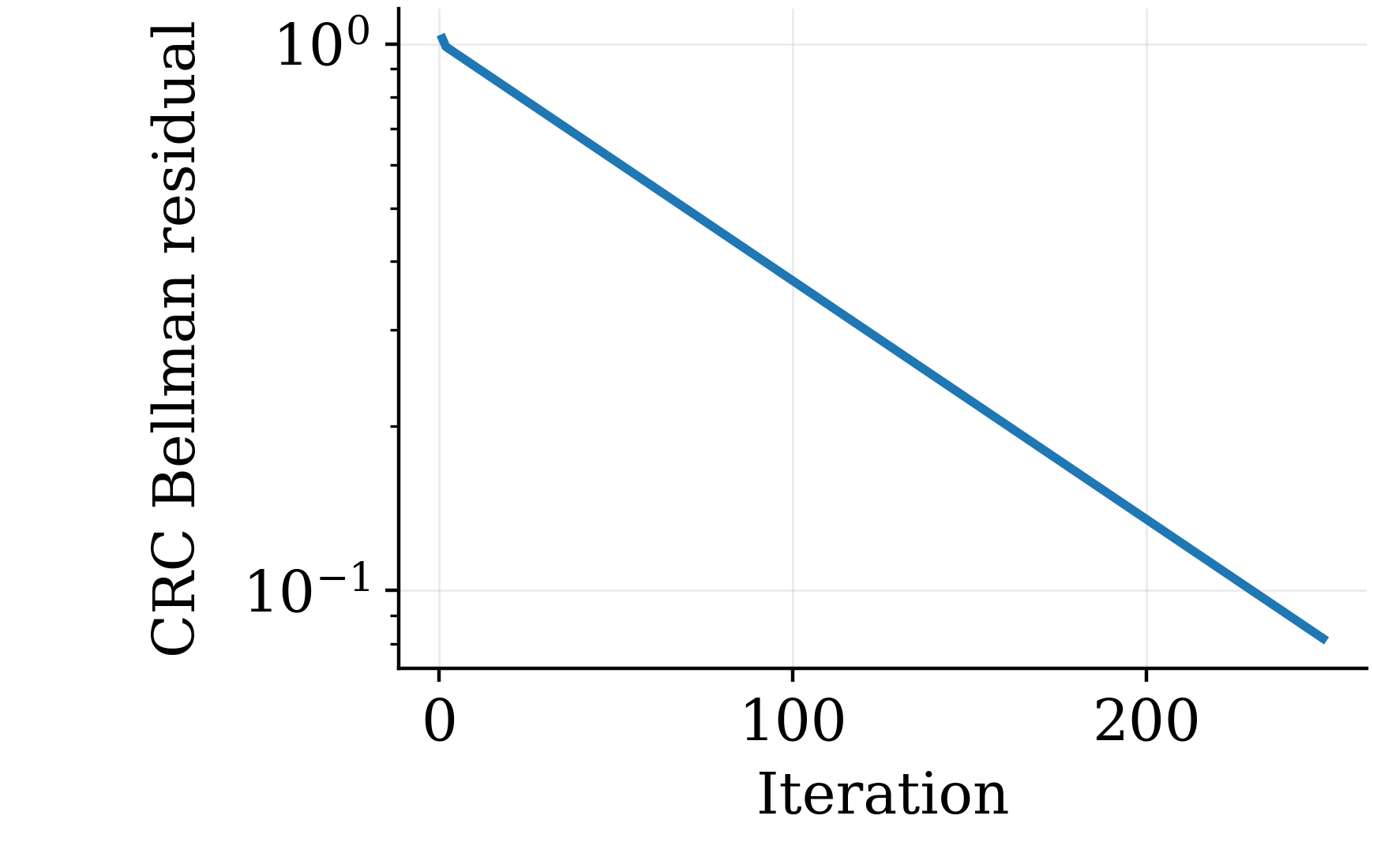}
  \end{minipage}
  \caption{\textbf{Bellman residual convergence in tabular coupled-dynamics environments.}
  \textbf{Left:} $5\times 5$ WGW. \textbf{Right:} CRC with $\lvert \Ss\rvert=25$.
  We plot the Bellman residual $\lVert M_k - T_2^\pi M_k\rVert_\lambda$ on logarithmic scale.}
  \label{fig:residuals_tabular}
  \vspace{-3mm}
\end{figure}

\textbf{Windy gridworld (WGW).} We consider a WGW environment with actions $\{\textsf{U}, \textsf{R}, \textsf{D}, \textsf{L}\}$  and a leftward ``wind gust'' that couples counterfactual next states under shared exogenous randomness (Figure \ref{fig:wgw}). Figure \ref{fig:residuals_tabular} (left) shows linear decay of the Bellman residual on a log scale, consistent with the $\gamma$-contraction and geometric convergence results.

\begin{wrapfigure}{r}{0.2\textwidth}
\centering\resizebox{0.2\textwidth}{!}{\begin{tikzpicture}[every node/.style={font=\small}]
  \node[circle, draw, minimum size=0.62cm] (si) at (0,0) {$s_i$};
  \node[circle, draw, minimum size=0.62cm] (sj) at (3.2,0) {$s_{i+1}$};
  \draw[->, very thick, blue!65!black, bend left=32]
    (si) to node[above] {$a_1,\ +r_i$} (sj);
  \draw[->, very thick, red!70!black, bend right=32]
    (si) to node[below] {$a_2,\ -r_i$} (sj);
\end{tikzpicture}}
    \caption{One CRC transition. Both actions reach the same next state but receive anti-correlated rewards.}
    \vspace{-5mm}
\label{fig:crc}
\end{wrapfigure}
\textbf{Coupled-reward chain (CRC).} We consider a finite chain of states with two actions and anti-correlated rewards (Figure \ref{fig:crc}). Figure \ref{fig:residuals_tabular} (right) again displays linear decay of the Bellman residual on a log scale. To visualize the cross-action dependence, we form, for each state $s$, the covariance and correlation matrices from the estimated uncentered second moments. Figure \ref{fig:wgw_corr_grid} shows, for a $3 \times 3$ WGW under a fixed goal-directed policy, the $4\times 4$ correlation matrix $\rho^\pi_s$ at each state. The figure indicates that the coupled-dynamics induce a structured, state-dependent joint law across actions which is invisible to an MDP marginal description. Figure \ref{fig:chain_corr} similarly reports the learned return correlation matrix $\rho^\pi_{s_0}$ at the initial chain state $s_0$ of a CRC with $\lvert \Ss \rvert =25$.

\begin{figure}
  \centering
  \includegraphics[width=0.25\textwidth]{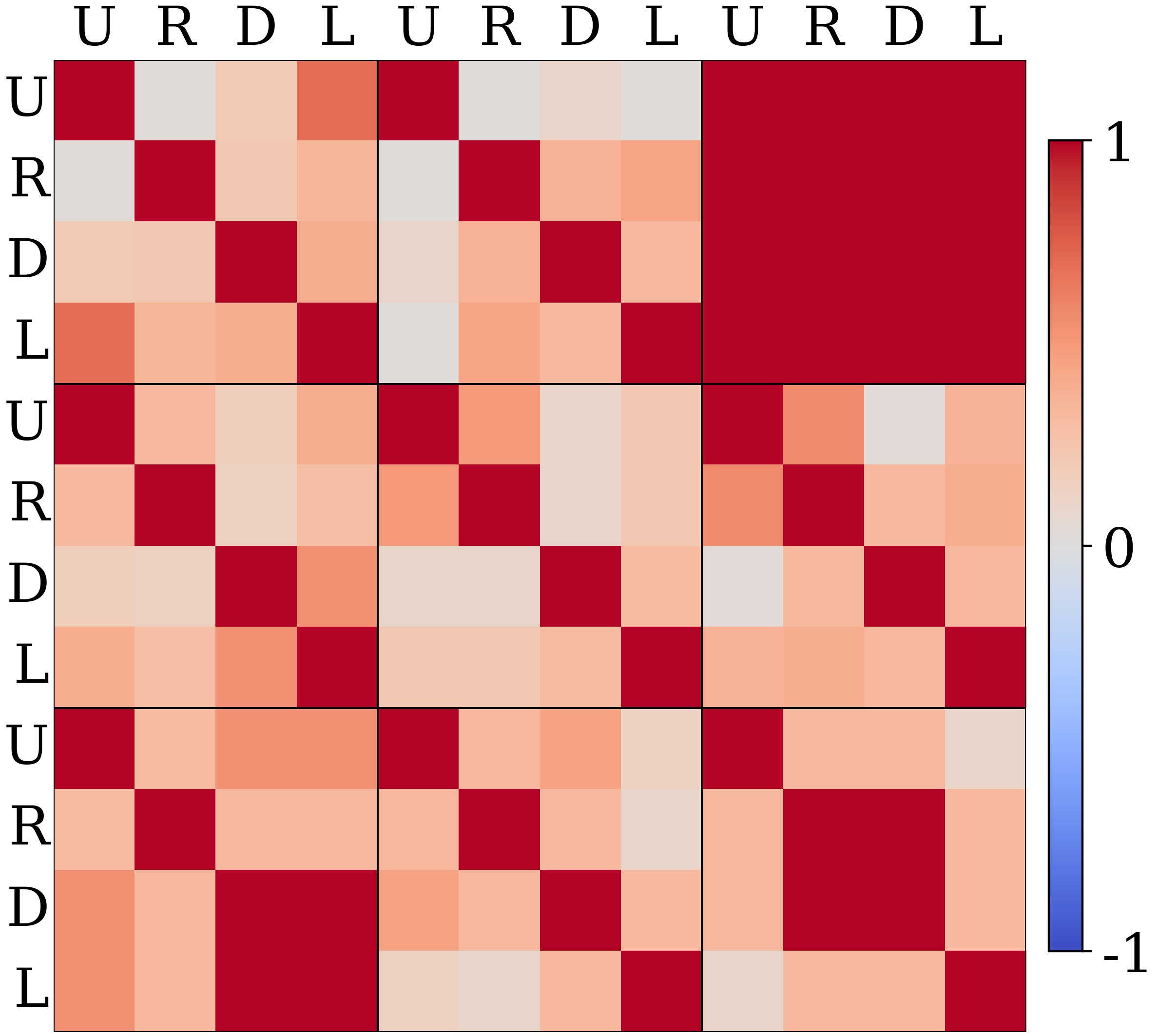}
  \caption{\textbf{Per-state action correlation matrices in WGW.}
  Each tile corresponds to a state in a $3\times 3$ WGW, and displays the $4\times 4$ correlation matrix
  $\rho^\pi_s$ computed through JIPE-$2$.}
  \label{fig:wgw_corr_grid}
  \vspace{-3mm}
\end{figure}

We then validate the use of mixed moments for gap statistics. Figure \ref{fig:gap_validation} (left, middle) compare JIPE-$2$-derived predictions of $\E[G^\pi]$ and $\mathrm{var}(G^\pi)$ to MC estimates. The agreement indicates that the learned mixed moments meaningfully capture joint-action return structure. Furthermore, using the Chebyshev inequality, we derive an upper bound on the inferiority probability $\mathbb{P}(G^\pi \le 0)$. We summarize the bound's tightness by the empirical cdf (ECDF) of the ratio $\hat{\mathbb{P}}(G^\pi \le 0) / \mathrm{Chebyshev(G^\pi)}$ (where $\hat{\mathbb{P}}$ is an MC estimate). Figure \ref{fig:gap_validation} (right) shows ratios bounded by $1$, indicating empirical non-violation of the upper-bound.
\begin{wrapfigure}{l}{0.175\textwidth}
\centering\resizebox{0.175\textwidth}{!}{\includegraphics{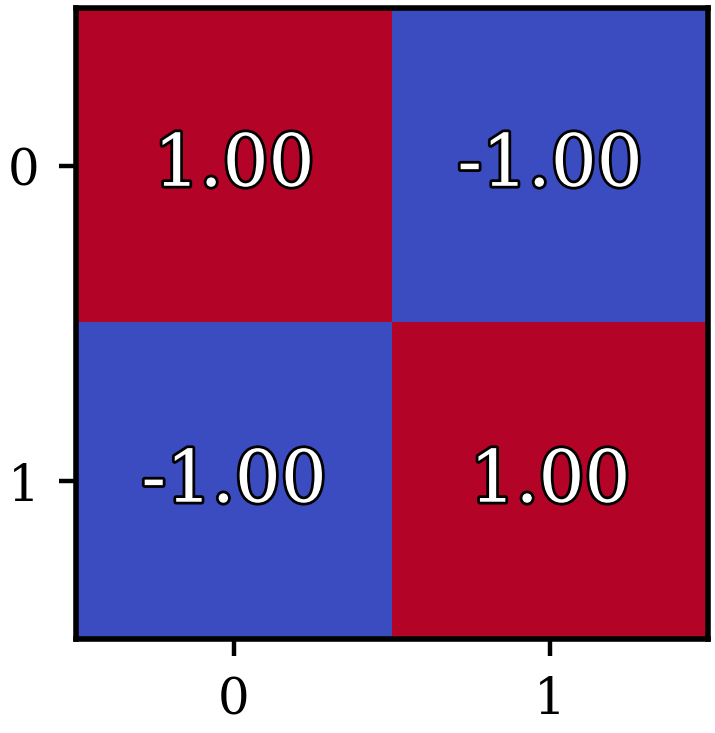}}
    \caption{Correlation matrix $\rho^\pi_{s_0}$ computed from the converged JIPE-$2$ moments in CRC.}
  \label{fig:chain_corr}
    \vspace{-2mm}
\end{wrapfigure}
To demonstrate scalability beyond tabular settings, we implement incremental JIPE-$2$ with neural function approximation in four ALE environments with a coupled-dynamics interface. Concretely, we wrap the simulator with a multi-action one-step query mechanism that returns counterfactual one-step outcomes for multiple actions under shared emulator randomness at the queried state. We parameterize $M_\mu(s,a)$ and $M_\Sigma(s, a, \Tilde{s}, \Tilde{a})$ with neural networks and perform SGD on TD errors. Figure \ref{fig:atari_td} reports the moving-average TD errors for the $\mu$ coordinate and multiple $\Sigma$ coordinate blocks. 
Across all cases, TD errors decrease by several orders of magnitude, providing evidence that incremental JIPE-$2$ can be combined with function approximation to mitigate the $\lvert \Ss \rvert^2\lvert \A\rvert^2$ complexity of tabular $2$nd moment evaluation. Appendix~\ref{app:ale} provides implementation and validation details. 

\section{Conclusion}
We introduced JMDPs for coupled-dynamics environments that explicitly model the joint law of counterfactual outcomes across actions. Under a one-step coupling regime, we derived Bellman operators for return moments and obtained convergence guarantees for DP algorithms and their incremental variants, along with a projected function-approximation formulation. This makes intrinsically joint quantities such as gap moments and inferiority-probability bounds computable from multi-action  simulator access. Experiments in coupled tabular and ALE environments show that learned mixed moments recover meaningful cross-action dependence absent from marginal MDP descriptions. A natural next step is control: extending joint moment evaluation to policy improvement under joint distributional objectives in JMDPs.

\begin{figure*}
  \centering
  \begin{minipage}{0.32\textwidth}
    \centering
    \includegraphics[width=\linewidth]{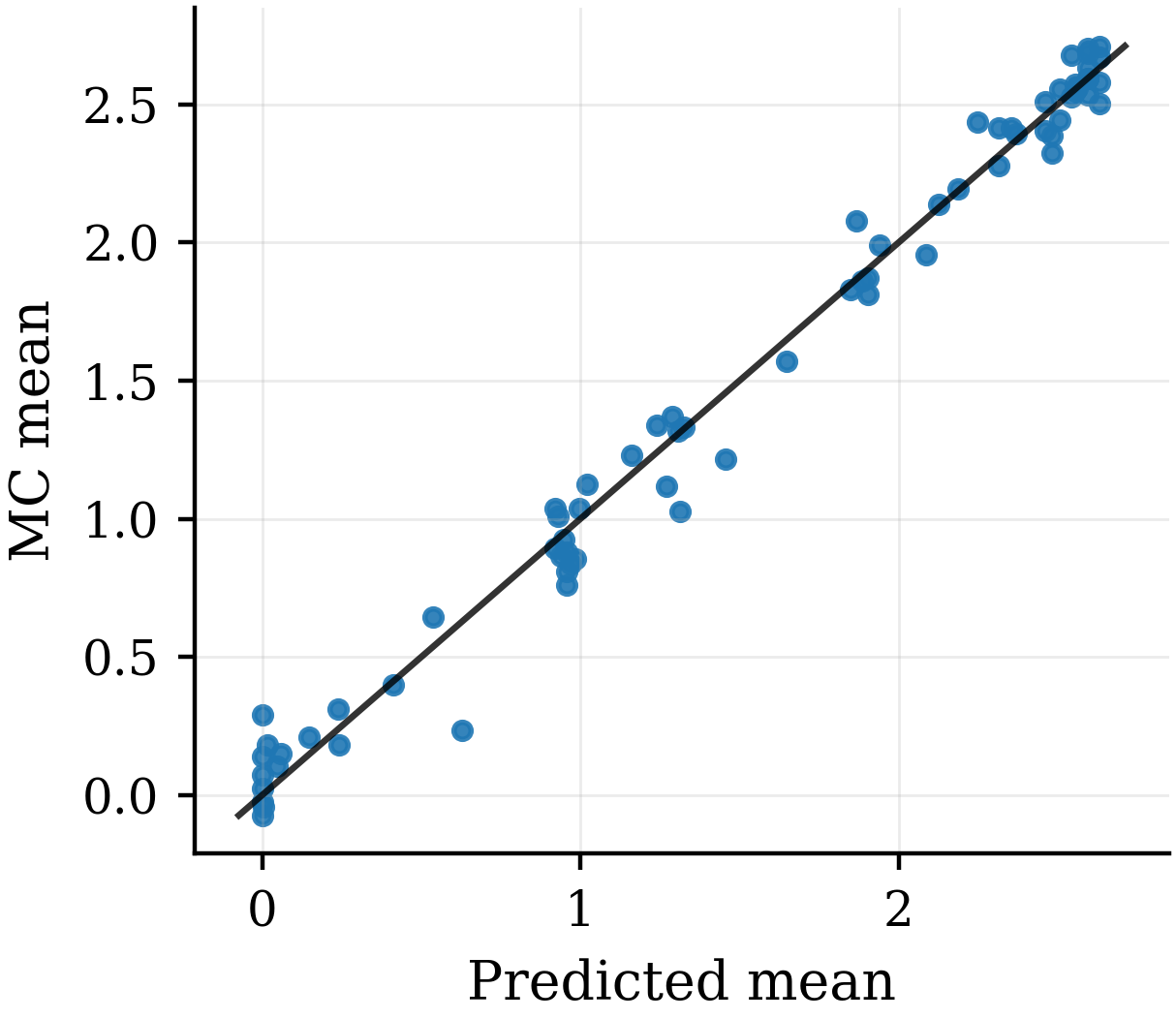}
  \end{minipage}\hfill
  \begin{minipage}{0.32\textwidth}
    \centering
    \includegraphics[width=\linewidth]{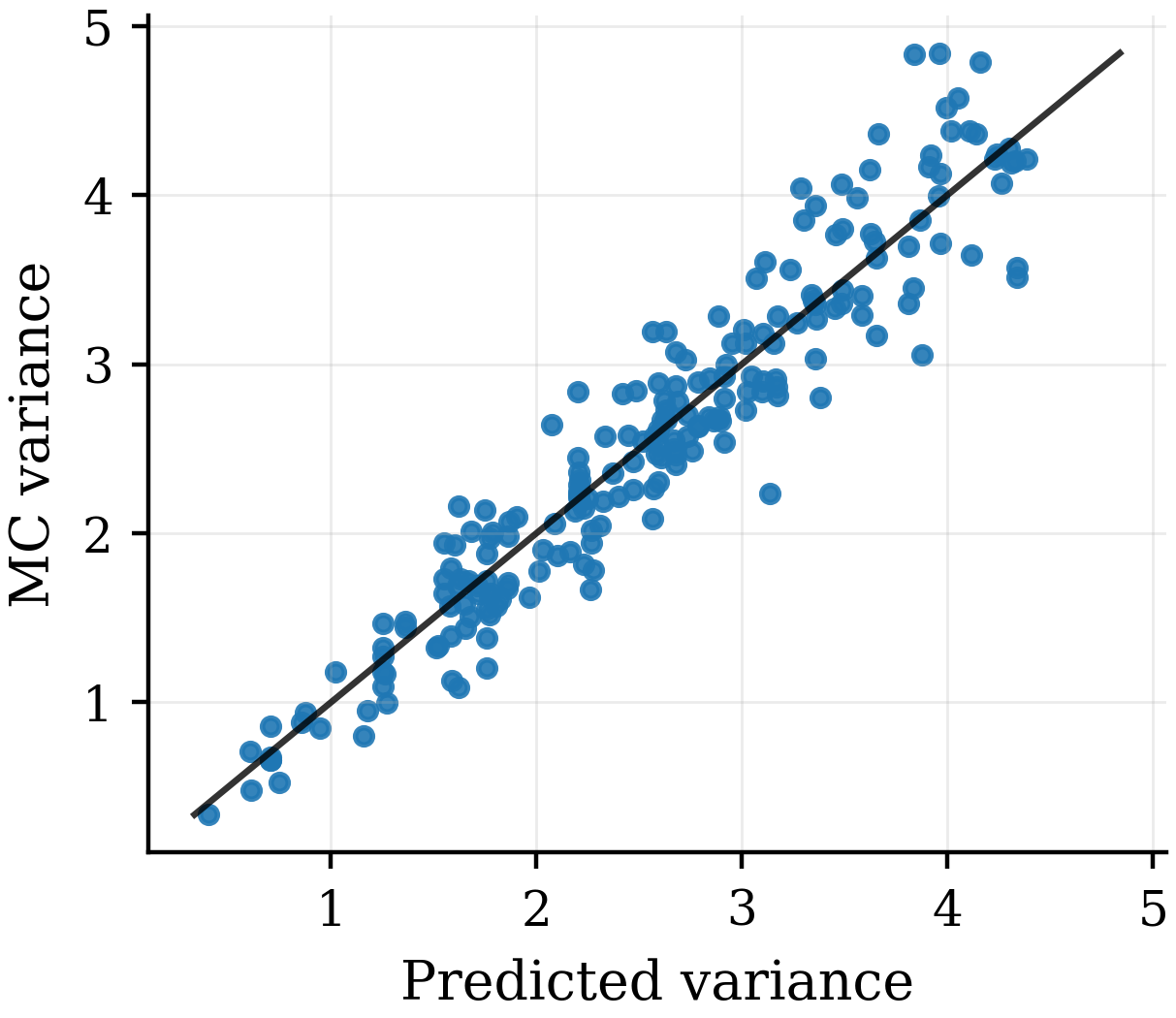}
  \end{minipage}\hfill
  \begin{minipage}{0.32\textwidth}
    \centering
    \includegraphics[width=\linewidth]{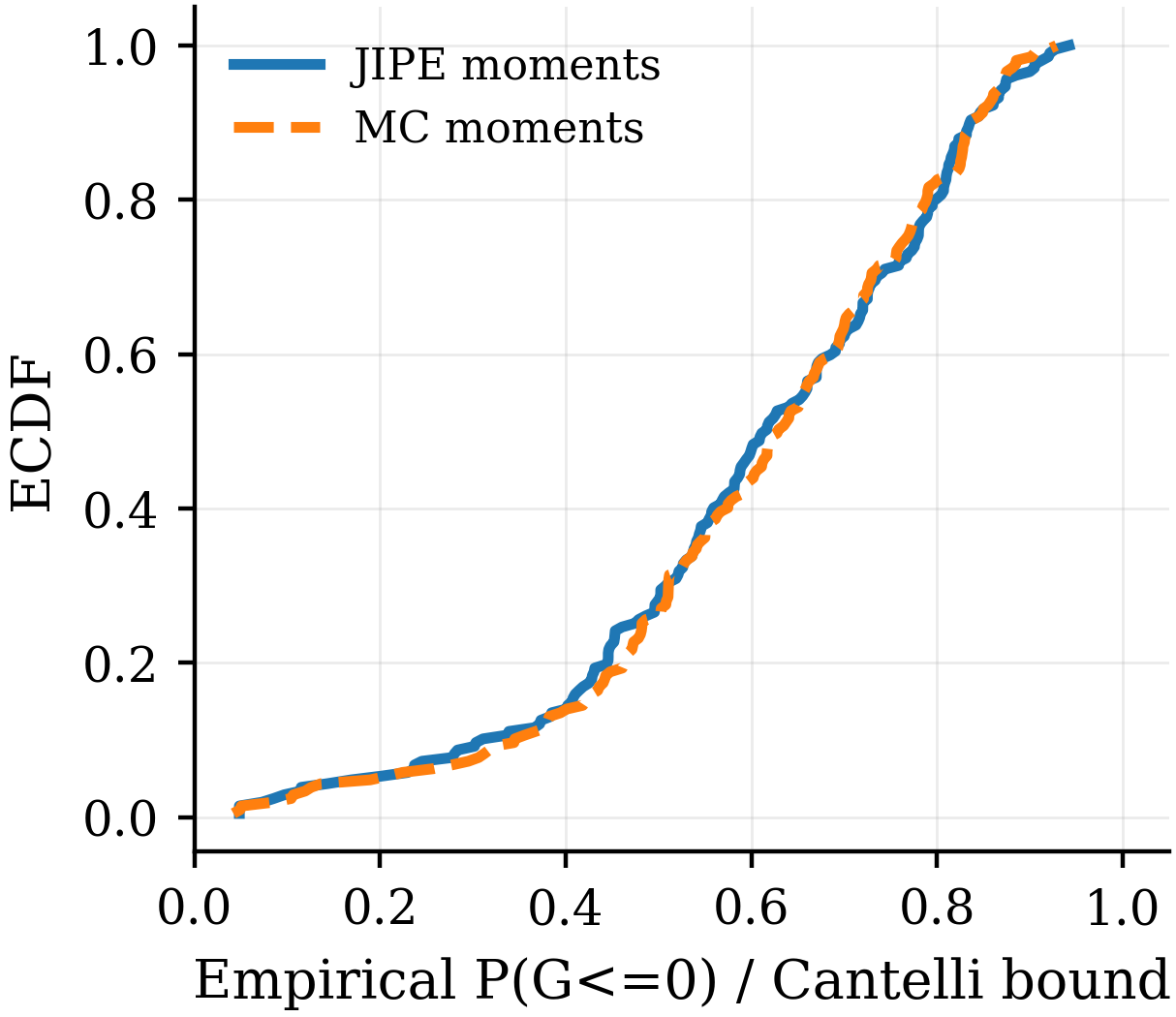}
  \end{minipage}
  \caption{\textbf{Gap validation in WGW.}
  We evaluate a fixed goal-directed policy and consider gaps between the policy’s action and alternatives.
  \textbf{Left:} Predicted vs. MC gap means $\E[G^\pi]$.
  \textbf{Middle:} Predicted vs. MC gap variances $\mathrm{var}(G^\pi)$.
  \textbf{Right:} Empirical cdf of the ratio $\hat{\mathbb{P}}(G^\pi\le 0)/\mathrm{Chebyshev}(G^\pi)$. Ratios closer to $1$ indicate tighter upper bounds, while small ratios indicate looseness. The \textbf{blue} curve computes the denominator using JIPE-$2$-estimated moments, while the \textbf{orange} curve uses MC-estimated moments (a near-ground-truth proxy), separating intrinsic bound looseness from moment-estimation error.}
\label{fig:gap_validation}
\vspace{-3mm}
\end{figure*}

\begin{figure*}
  \centering
  \begin{minipage}{0.49\textwidth}
    \centering
    \includegraphics[width=\linewidth]{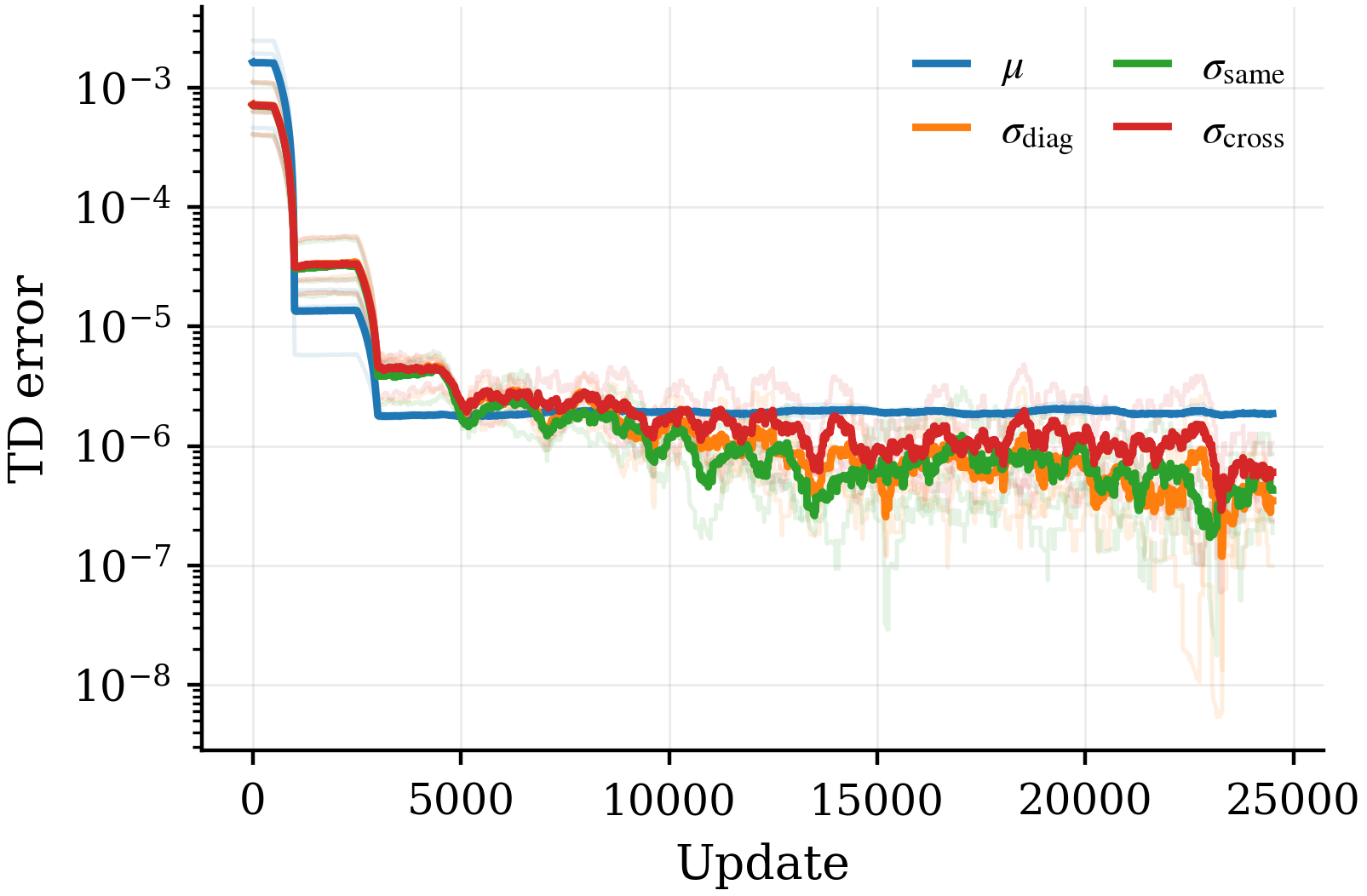}
  \end{minipage}\hfill
  \begin{minipage}{0.49\textwidth}
    \centering
    \includegraphics[width=\linewidth]{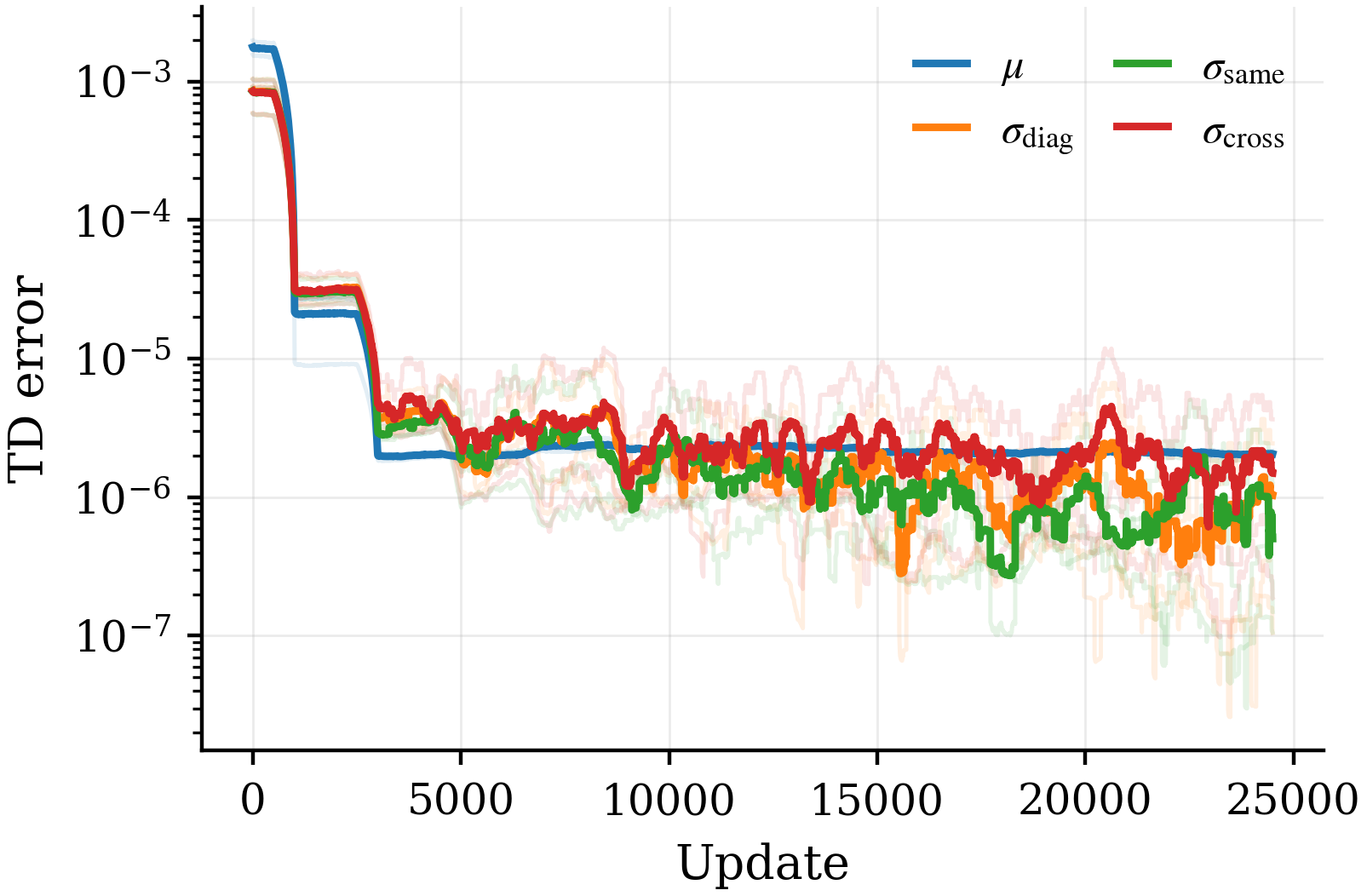}
  \end{minipage}

  \vspace{1.5mm}

  \begin{minipage}{0.49\textwidth}
    \centering
    \includegraphics[width=\linewidth]{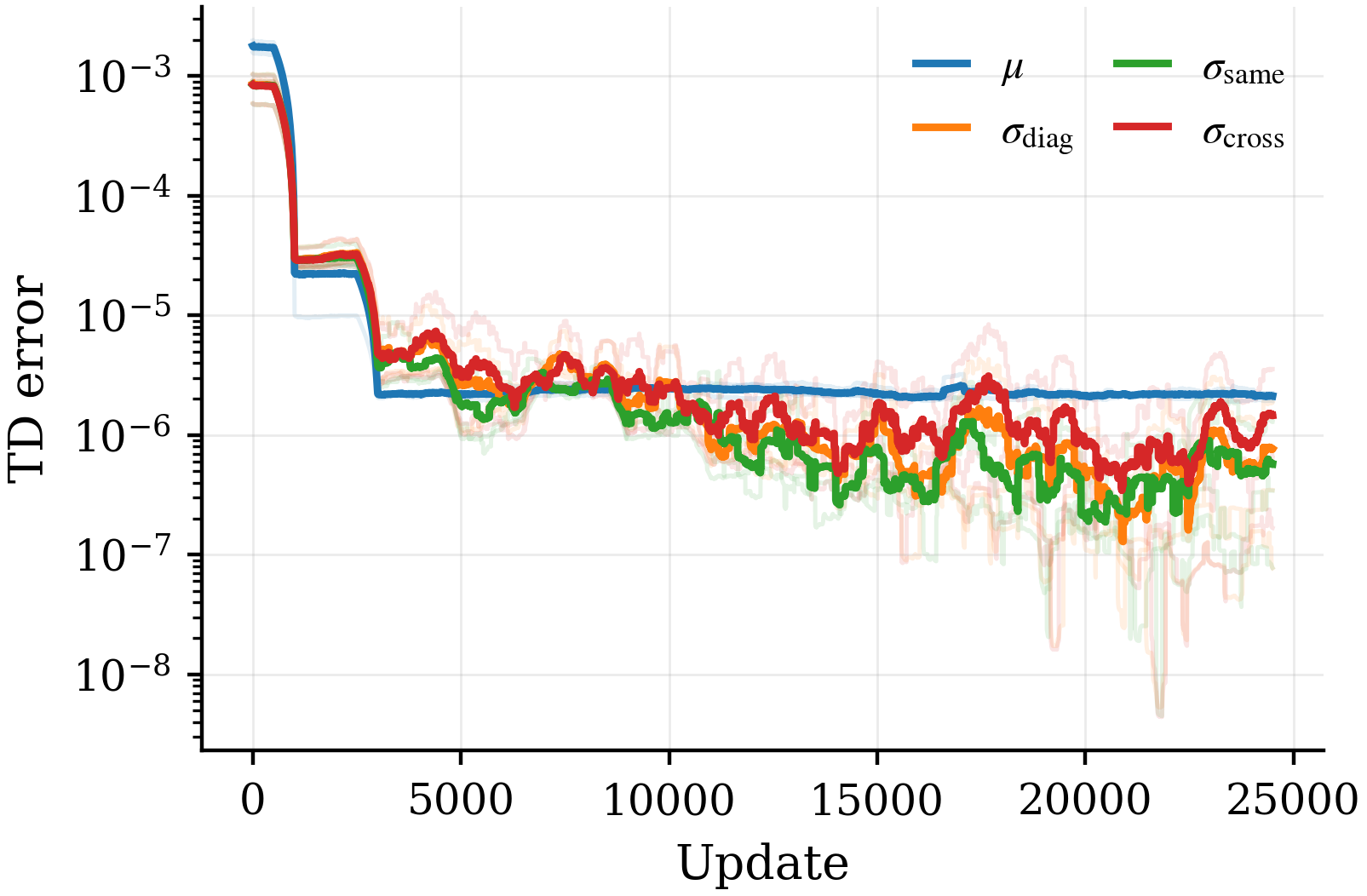}
  \end{minipage}\hfill
  \begin{minipage}{0.49\textwidth}
    \centering
    \includegraphics[width=\linewidth]{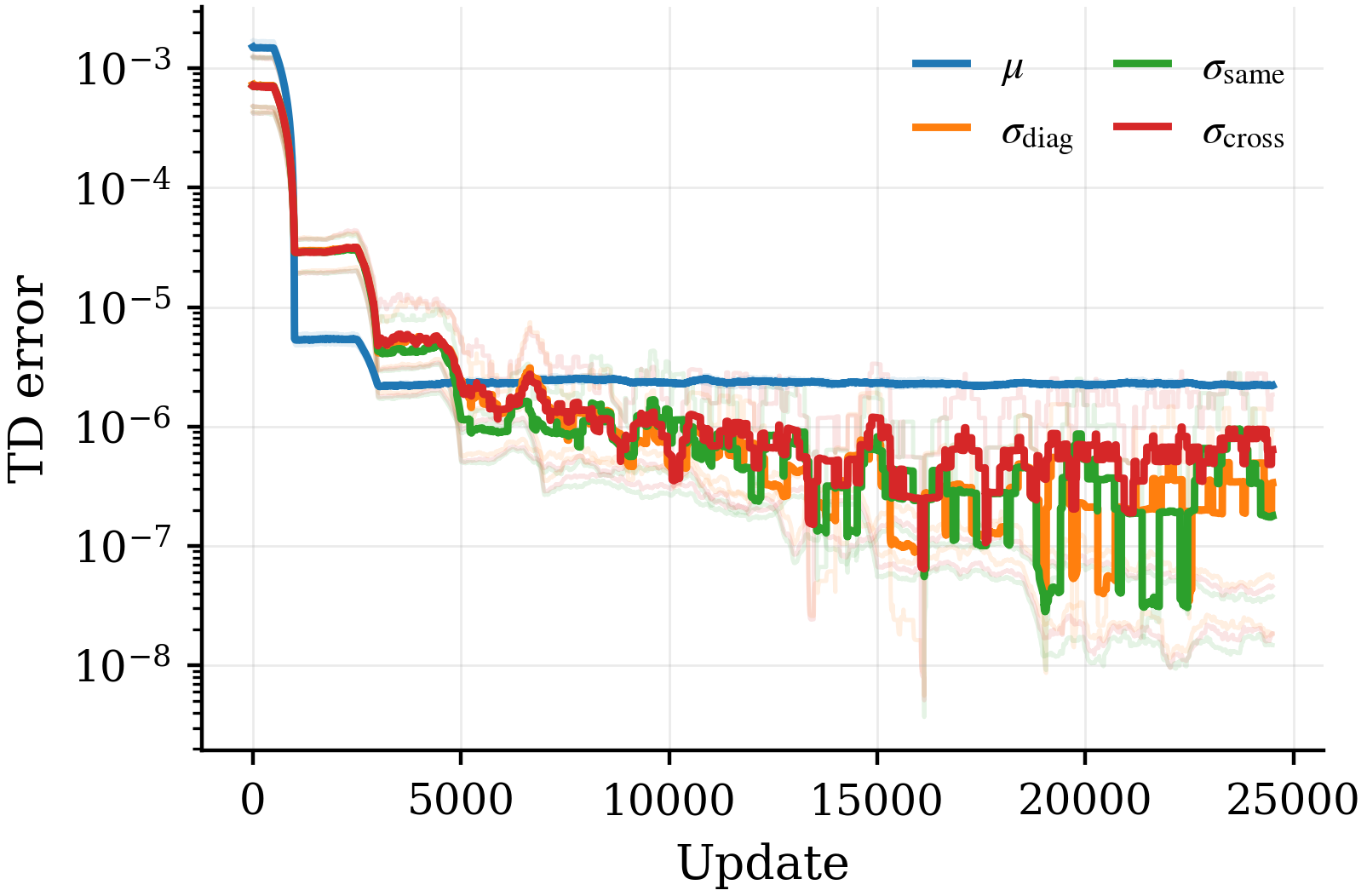}
  \end{minipage}
  \caption{\textbf{Incremental JIPE-$2$ with neural functional approximation on coupled ALE environments.}
  Moving-average TD errors (log scale) during SGD-based incremental JIPE-$2$ with neural function approximation, shown for Pong, BattleZone, Boxing, and Atlantis (left-to-right, top-to-bottom). The plotted coordinates corresponds to the JIPE-$2$ blocks: $\mu$ is the $1$st-moment (mean-return) component, $\sigma_{\mathrm{diag}}$ is the block where $s=\Tilde{s}$ and $a=\Tilde{a}$. $\sigma_{\mathrm{same}}$ is the block where $s =\Tilde{s}$, $a \ne \Tilde{a}$. $\sigma_{\mathrm{cross}}$ is the block where $s \ne \Tilde{s}$, $a \ne \Tilde{a}$. Bold curves show the across-seed mean TD error, and faint curves show the individual trajectories from \textbf{three random seeds}. Across all four ALE games, each block exhibits stable descent over training, supporting the practicality of the incremental JIPE-$2$ updates under nonlinear function approximation. Gap moments are evaluated separately against recursive-pair MC in Appendix \ref{app:ale}.}
  \label{fig:atari_td}
  \vspace{-3mm}
\end{figure*}

\newpage

\begin{acknowledgements}
This work was supported in part by NSF under grants DMS-2502560 and CNS-2313109.
\end{acknowledgements}
\bibliography{uai2026-template}

@InProceedings{c51,
  title = 	 {A Distributional Perspective on Reinforcement Learning},
  author =       {Marc G. Bellemare and Will Dabney and R{\'e}mi Munos},
  booktitle = 	 {Proceedings of the 34th International Conference on Machine Learning},
  pages = 	 {449--458},
  year = 	 {2017},
  editor = 	 {Precup, Doina and Teh, Yee Whye},
  volume = 	 {70},
  series = 	 {Proceedings of Machine Learning Research},
  month = 	 {06--11 Aug},
  publisher =    {PMLR},
  url = 	 {https://proceedings.mlr.press/v70/bellemare17a.html}
}

@book{BartoSutton,
author = {Sutton, Richard S. and Barto, Andrew G.},
title = {Reinforcement Learning: An Introduction},
year = {2018},
isbn = {0262039249},
publisher = {A Bradford Book},
address = {Cambridge, MA, USA}
}

@inproceedings{cvar-mdp,
author = {Chow, Yinlam and Ghavamzadeh, Mohammad},
title = {Algorithms for CVaR optimization in MDPs},
year = {2014},
publisher = {MIT Press},
address = {Cambridge, MA, USA},
booktitle = {Proceedings of the 28th International Conference on Neural Information Processing Systems - Volume 2},
pages = {3509–3517},
numpages = {9},
location = {Montreal, Canada},
series = {NIPS'14}
}

@book{DRL-textbook,
	author = {Bellemare, Marc G. and Dabney, Will and Rowland, Mark},
	doi = {10.7551/mitpress/14207.001.0001},
	eprint = {https://direct.mit.edu/book-pdf/2111075/book\_9780262374026.pdf},
	isbn = {9780262374026},
	month = {05},
	publisher = {The MIT Press},
	title = {Distributional Reinforcement Learning},
	url = {https://doi.org/10.7551/mitpress/14207.001.0001},
	year = {2023}}

@article{estimating-variance,
  author  = {Aviv Tamar and Dotan Di Castro and Shie Mannor},
  title   = {Learning the Variance of the Reward-To-Go},
  journal = {Journal of Machine Learning Research},
  year    = {2016},
  volume  = {17},
  number  = {13},
  pages   = {1--36},
  url     = {http://jmlr.org/papers/v17/14-335.html}
}

@article{FQF,
  title={Fully parameterized quantile function for distributional reinforcement learning},
  author={Yang, Derek and Zhao, Li and Lin, Zichuan and Qin, Tao and Bian, Jiang and Liu, Tie-Yan},
  journal={Advances in neural information processing systems},
  volume={32},
  year={2019}
}

@InProceedings{iqn,
  title = 	 {Implicit Quantile Networks for Distributional Reinforcement Learning},
  author =       {Dabney, Will and Ostrovski, Georg and Silver, David and Munos, Remi},
  booktitle = 	 {Proceedings of the 35th International Conference on Machine Learning},
  pages = 	 {1096--1105},
  year = 	 {2018},
  editor = 	 {Dy, Jennifer and Krause, Andreas},
  volume = 	 {80},
  series = 	 {Proceedings of Machine Learning Research},
  month = 	 {10--15 Jul},
  publisher =    {PMLR},
  url = 	 {https://proceedings.mlr.press/v80/dabney18a.html}
}

@inproceedings{mv-mdp,
author = {Mannor, Shie and Tsitsiklis, John N.},
title = {Mean-variance optimization in Markov decision processes},
year = {2011},
isbn = {9781450306195},
publisher = {Omnipress},
address = {Madison, WI, USA},
booktitle = {Proceedings of the 28th International Conference on International Conference on Machine Learning},
pages = {177–184},
numpages = {8},
location = {Bellevue, Washington, USA},
series = {ICML'11}
}

@ARTICLE{mvsk,
  author={Zhou, Rui and Palomar, Daniel P.},
  journal={IEEE Transactions on Signal Processing}, 
  title={Solving High-Order Portfolios via Successive Convex Approximation Algorithms}, 
  year={2021},
  volume={69},
  number={},
  pages={892-904},
  doi={10.1109/TSP.2021.3051369}}

@inproceedings{qr-dqn,
author = {Dabney, Will and Rowland, Mark and Bellemare, Marc G. and Munos, R\'{e}mi},
title = {Distributional reinforcement learning with quantile regression},
year = {2018},
isbn = {978-1-57735-800-8},
publisher = {AAAI Press},
booktitle = {Proceedings of the Thirty-Second AAAI Conference on Artificial Intelligence and Thirtieth Innovative Applications of Artificial Intelligence Conference and Eighth AAAI Symposium on Educational Advances in Artificial Intelligence},
articleno = {353},
numpages = {10},
location = {New Orleans, Louisiana, USA},
series = {AAAI'18/IAAI'18/EAAI'18}
}

@article{ALE,
author = {Bellemare, Marc and Naddaf, Yavar and Veness, Joel and Bowling, Michael},
year = {2012},
month = {07},
pages = {},
title = {The Arcade Learning Environment: An Evaluation Platform for General Agents},
volume = {47},
journal = {Journal of Artificial Intelligence Research},
doi = {10.1613/jair.3912}
}

@article{sobel,
 ISSN = {00219002},
 URL = {http://www.jstor.org/stable/3213832},
 author = {Matthew J. Sobel},
 journal = {Journal of Applied Probability},
 number = {4},
 pages = {794--802},
 publisher = {Applied Probability Trust},
 title = {The Variance of Discounted Markov Decision Processes},
 urldate = {2025-04-15},
 volume = {19},
 year = {1982}
}

@InProceedings{tamar2,
  title = 	 {Temporal Difference Methods for the Variance of the Reward To Go},
  author = 	 {Tamar, Aviv and Di Castro, Dotan and Mannor, Shie},
  booktitle = 	 {Proceedings of the 30th International Conference on Machine Learning},
  pages = 	 {495--503},
  year = 	 {2013},
  editor = 	 {Dasgupta, Sanjoy and McAllester, David},
  volume = 	 {28},
  series = 	 {Proceedings of Machine Learning Research},
  address = 	 {Atlanta, Georgia, USA},
  month = 	 {17--19 Jun},
  publisher =    {PMLR},
  url = 	 {https://proceedings.mlr.press/v28/tamar13.html}
}

@book{puterman, author = {Puterman, Martin L.}, title = {Markov Decision Processes: Discrete Stochastic Dynamic Programming}, year = {1994}, isbn = {0471619779}, publisher = {John Wiley \& Sons, Inc.}, address = {USA}, edition = {1st} }

@InProceedings{bellman-gan,
  title = 	 {Distributional Multivariate Policy Evaluation and Exploration with the {B}ellman {GAN}},
  author =       {Freirich, Dror and Shimkin, Tzahi and Meir, Ron and Tamar, Aviv},
  booktitle = 	 {Proceedings of the 36th International Conference on Machine Learning},
  pages = 	 {1983--1992},
  year = 	 {2019},
  editor = 	 {Chaudhuri, Kamalika and Salakhutdinov, Ruslan},
  volume = 	 {97},
  series = 	 {Proceedings of Machine Learning Research},
  month = 	 {09--15 Jun},
  publisher =    {PMLR},
  url = 	 {https://proceedings.mlr.press/v97/freirich19a.html}
}

@article{lu2020sample,
  title={Sample-efficient reinforcement learning via counterfactual-based data augmentation},
  author={Lu, Chaochao and Huang, Biwei and Wang, Ke and Hern{\'a}ndez-Lobato, Jos{\'e} Miguel and Zhang, Kun and Sch{\"o}lkopf, Bernhard},
  journal={arXiv preprint arXiv:2012.09092},
  year={2020}
}

@inproceedings{amitai2024explaining,
  title={Explaining reinforcement learning agents through counterfactual action outcomes},
  author={Amitai, Yotam and Septon, Yael and Amir, Ofra},
  booktitle={Proceedings of the AAAI Conference on Artificial Intelligence},
  volume={38},
  issue={9},
  pages={10003--10011},
  year={2024}
}

@article{wiltzer2024action,
  title={Action gaps and advantages in continuous-time distributional reinforcement learning},
  author={Wiltzer, Harley and Bellemare, Marc and Meger, David and Shafto, Patrick and Jhaveri, Yash},
  journal={Advances in Neural Information Processing Systems},
  volume={37},
  pages={47815--47848},
  year={2024}
}

@book{glasserman2004monte,
  title={Monte Carlo methods in financial engineering},
  author={Glasserman, Paul},
  volume={53},
  year={2004},
  publisher={Springer}
}

@book{shapiro2021lectures,
  title={Lectures on stochastic programming: modeling and theory},
  author={Shapiro, Alexander and Dentcheva, Darinka and Ruszczynski, Andrzej},
  year={2021},
  publisher={SIAM}
}

@inproceedings{nguyen2021distributional,
  title={Distributional reinforcement learning via moment matching},
  author={Nguyen-Tang, Thanh and Gupta, Sunil and Venkatesh, Svetha},
  booktitle={Proceedings of the AAAI conference on artificial intelligence},
  volume={35},
  pages={9144--9152},
  year={2021}
}

@article{zhang2021distributional,
  title={Distributional reinforcement learning for multi-dimensional reward functions},
  author={Zhang, Pushi and Chen, Xiaoyu and Zhao, Li and Xiong, Wei and Qin, Tao and Liu, Tie-Yan},
  journal={Advances in Neural Information Processing Systems},
  volume={34},
  pages={1519--1529},
  year={2021}
}

@article{wiltzer2024foundations,
  title={Foundations of multivariate distributional reinforcement learning},
  author={Wiltzer, Harley and Farebrother, Jesse and Gretton, Arthur and Rowland, Mark},
  journal={Advances in Neural Information Processing Systems},
  volume={37},
  pages={101297--101336},
  year={2024}
}

@book{bertsekas1996stochastic,
  title={Stochastic optimal control: the discrete-time case},
  author={Bertsekas, Dimitri and Shreve, Steven E},
  volume={5},
  year={1996},
  publisher={Athena Scientific}
}

@inproceedings{ng2013pegasus,
author = {Ng, Andrew Y. and Jordan, Michael}, title = {PEGASUS: a policy search method for large MDPs and POMDPs}, year = {2000}, isbn = {1558607099}, publisher = {Morgan Kaufmann Publishers Inc.}, address = {San Francisco, CA, USA}, booktitle = {Proceedings of the Sixteenth Conference on Uncertainty in Artificial Intelligence}, pages = {406–415}, numpages = {10}, location = {Stanford, California}, series = {UAI'00} 
}

@article{artzner1999coherent,
  title={Coherent measures of risk},
  author={Artzner, Philippe and Delbaen, Freddy and Eber, Jean-Marc and Heath, David},
  journal={Mathematical finance},
  volume={9},
  number={3},
  pages={203--228},
  year={1999},
  publisher={Wiley Online Library}
}

@article{rockafellar2000optimization,
  title={Optimization of conditional value-at-risk},
  author={Rockafellar, R Tyrrell and Uryasev, Stanislav and others},
  journal={Journal of risk},
  volume={2},
  pages={21--42},
  year={2000}
}

@article{jaakkola1993convergence,
  title={Convergence of stochastic iterative dynamic programming algorithms},
  author={Jaakkola, Tommi and Jordan, Michael and Singh, Satinder},
  journal={Advances in neural information processing systems},
  volume={6},
  year={1993}
}

@article{tsitsiklis1994asynchronous,
  title={Asynchronous stochastic approximation and Q-learning},
  author={Tsitsiklis, John N},
  journal={Machine learning},
  volume={16},
  number={3},
  pages={185--202},
  year={1994},
  publisher={Springer}
}

@inproceedings{bertsekas1995neuro,
  title={Neuro-dynamic programming: an overview},
  author={Bertsekas, Dimitri P and Tsitsiklis, John N},
  booktitle={Proceedings of 1995 34th IEEE conference on decision and control},
  volume={1},
  pages={560--564},
  year={1995},
  organization={IEEE}
}

@book{boucheron2013,
    author = {Boucheron, Stéphane and Lugosi, Gábor and Massart, Pascal},
    title = {Concentration Inequalities: A Nonasymptotic Theory of Independence},
    publisher = {Oxford University Press},
    year = {2013},
    month = {02},
    isbn = {9780199535255},
    doi = {10.1093/acprof:oso/9780199535255.001.0001},
    url = {https://doi.org/10.1093/acprof:oso/9780199535255.001.0001},
}

@inproceedings{scherrer2010should,
  title={Should one compute the Temporal Difference fix point or minimize the Bellman Residual? The unified oblique projection view},
  author={Scherrer, Bruno},
  booktitle={27th International Conference on Machine Learning-ICML 2010},
  year={2010}
}

@article{robbins1951stochastic,
  title={A stochastic approximation method},
  author={Robbins, Herbert and Monro, Sutton},
  journal={The annals of mathematical statistics},
  pages={400--407},
  year={1951},
  publisher={JSTOR}
}

\newpage

\onecolumn

\title{Joint MDPs and Distributional Reinforcement Learning in Coupled-Dynamics Environments\\(Supplementary Material)}
\maketitle
\appendix
\section{Proofs}\label{app:proofs}
\begin{lemma}
Let $\lambda := 2/(1-\gamma)$. For any moment collection $M$, define the norm
\begin{equation}
\norm{M}_{\lambda} := \max \left\{\norm{M_\mu}_\infty, \frac{1}{\lambda}\norm{M_\Sigma}_\infty \right\}.
\end{equation}
Then, $T^\pi_2$ is a $\gamma$-contraction in $\norm{\,\cdot\,}_{\lambda}$.
\end{lemma}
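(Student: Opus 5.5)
The plan is to use linearity: $T^\pi_2$ is affine in $M$, so for two moment collections $M, M'$ the reward-only terms cancel in $T^\pi_2 M - T^\pi_2 M'$. Write $D := M - M'$, with components $D_\mu = M_\mu - M'_\mu$ and $D_\Sigma = M_\Sigma - M'_\Sigma$. The two blocks of the difference are then bounded separately, using only that rewards lie in $[0,1]$ and that expectations of bounded functions are bounded by the sup norm. The relevant joint law (the $\mathcal{J}_1$ product law, or the same-state $\mathcal{J}_2$ law) plays no role beyond being a probability measure, so one argument covers the diagonal, same-state and distinct-state cases simultaneously.

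\textbf{Mean block.} By \eqref{eq:Tmean},
\begin{equation*}
(T^\pi_2 M)_\mu(s,a) - (T^\pi_2 M')_\mu(s,a) = \gamma\,\E\big[D_\mu(S'^{(a)},A'^{(a)}) \mid S=s\big].
\end{equation*}
Its absolute value is therefore at most $\gamma\norm{D_\mu}_\infty \le \gamma\norm{D}_\lambda$.

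\textbf{Second-moment block.} By \eqref{eq:Tcov}, the difference at $(s,a,\Tilde{s},\Tilde{a})$ equals
\begin{equation*}
\E\big[\gamma R^{(a)} D_\mu(\Tilde{S}'^{(\Tilde{a})},\Tilde{A}'^{(\Tilde{a})}) + \gamma \Tilde{R}^{(\Tilde{a})} D_\mu(S'^{(a)},A'^{(a)}) + \gamma^2 D_\Sigma(\cdot)\big].
\end{equation*}
Since $|R|\le 1$, its absolute value is at most $2\gamma\norm{D_\mu}_\infty + \gamma^2\norm{D_\Sigma}_\infty$. Dividing by $\lambda$ and using $\norm{D_\mu}_\infty \le \norm{D}_\lambda$ and $\norm{D_\Sigma}_\infty \le \lambda\norm{D}_\lambda$ gives the bound
\begin{equation*}
\Big(\frac{2\gamma}{\lambda} + \gamma^2\Big)\norm{D}_\lambda = \big(\gamma(1-\gamma) + \gamma^2\big)\norm{D}_\lambda = \gamma\norm{D}_\lambda.
\end{equation*}
This identity is exactly why $\lambda = 2/(1-\gamma)$ was chosen. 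Taking the maximum over both blocks yields $\norm{T^\pi_2 M - T^\pi_2 M'}_\lambda \le \gamma\norm{M - M'}_\lambda$.

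\textbf{Expected obstacle.} The delicate point is not the algebra but well-definedness. The expectations must be finite, which holds because $\Ss$ and $\A$ are finite, so every $M$ is bounded. We must also check that $\norm{\cdot}_\lambda$ is a genuine norm, which is clear since it is a weighted max of sup norms. Finally, the cross terms must be bounded by $\norm{D_\mu}_\infty$ regardless of the coupling, which follows since the bound holds pointwise before taking expectation.
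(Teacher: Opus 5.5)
Your proposal is correct and follows the paper's proof essentially step for step: the reward-only terms cancel, the mean block is bounded by $\gamma\norm{D_\mu}_\infty$, the second-moment block by $2\gamma\norm{D_\mu}_\infty+\gamma^2\norm{D_\Sigma}_\infty$ uniformly over all coupling cases, and $\lambda=2/(1-\gamma)$ gives $2\gamma/\lambda+\gamma^2=\gamma$. The paper likewise notes that $\norm{\cdot}_\lambda$ is a norm as a maximum of (scaled) sup norms, so nothing is missing.
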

\begin{proof}
Firstly, $\norm{\, \cdot \,}_{\lambda}$ is clearly a norm, since it is a maximum over two norms in their respective coordinate spaces. 

Now, we have, for any $(s, a) \in \Ss \times \A$,
\begin{equation}
\begin{split}
\left\lvert (T^\pi_2M)_\mu(s, a) -(T^\pi_2 M')_\mu(s,a)\right\rvert = \gamma\left\lvert \E\big[(M-M')_\mu\big(S'^{(a)}, A'^{(a)}\big) \mid S = s\big]\right\rvert \le\gamma \LVert(M-M')_\mu\RVert_\infty.
\end{split}
\end{equation}
Then, taking the supremum over $(s,a)$ on the left-hand side directly yields
\begin{equation}\label{eq:contraction2first}
\begin{split}
\LVert (T^\pi_2M - T^\pi_2M')_\mu\RVert_\infty \le \gamma \LVert(M-M')_\mu\RVert_\infty &\le \gamma \max \left\{ \LVert (M-M')_\mu\RVert_\infty, \frac{1}{\lambda}\LVert (M-M')_\Sigma\RVert_\infty\right\} \\
&= \gamma \LVert M-M'\RVert_{\lambda}
\end{split}
\end{equation}
Similarly for the second-moment coordinate space, for any $(s, a, \Tilde{s}, \Tilde{a}) \in (\Ss \times \A)^2$,
\begin{equation}
\begin{gathered}
\Lvert(T^\pi_2 M)_\Sigma(s, a, \Tilde{s}, \Tilde{a}) - (T^\pi_2 M')_\Sigma(s, a, \Tilde{s}, \Tilde{a})\Rvert = \big\lvert \E \big[\gamma R^{(a)} (M-M')_\mu \big(\Tilde{S}'^{(\Tilde{a})}, \Tilde{A}'^{(\Tilde{a})}\big) + \gamma \Tilde{R}^{(\Tilde{a})}(M-M')_\mu\big(S'^{(a)}, A'^{(a)}\big) \\ + \gamma^2 (M-M')_\Sigma \big(S'^{(a)}, A'^{(a)}, \Tilde{S}'^{(\Tilde{a})}, \Tilde{A}'^{(\Tilde{a})}\big)\mid (S, \Tilde{S}) = (s, \Tilde{s}) \big]\rvert \\
\le  \E \big[\gamma \Lvert R^{(a)} (M-M')_\mu \big(\Tilde{S}'^{(\Tilde{a})}, \Tilde{A}'^{(\Tilde{a})}\big)\Rvert + \gamma \Lvert\Tilde{R}^{(\Tilde{a})}(M-M')_\mu\big(S'^{(a)}, A'^{(a)}\big)\Rvert \\ + \gamma^2 \Lvert (M-M')_\Sigma \big(S'^{(a)}, A'^{(a)}, \Tilde{S}'^{(\Tilde{a})}, \Tilde{A}'^{(\Tilde{a})}\big)\Rvert\mid (S, \Tilde{S}) = (s, \Tilde{s}) \big]  \\
\le 2\gamma \LVert (M-M')_\mu\RVert_\infty + \gamma^2 \LVert (M-M')_\Sigma \RVert_\infty.
\end{gathered}
\end{equation}
Once again, taking the supremum over $(s,a, \Tilde{s}, \Tilde{a})$ on the left-hand side and dividing by $\lambda$ yields
\begin{equation}\label{eq:contraction2second}
\begin{split}
\frac{1}{\lambda}\LVert (T^\pi_2 M - T^\pi_2 M')_\Sigma)\RVert_\infty &\le \frac{2\gamma}{\lambda} \LVert (M-M')_\mu\RVert_\infty + \frac{\gamma^2}{\lambda} \LVert (M-M')_\Sigma \RVert_\infty \\
&\le \big(\frac{2\gamma}{\lambda} + \gamma^2 \big)\max \left\{ \LVert (M-M')_\mu\RVert_\infty, \frac{1}{\lambda}\LVert (M-M')_\Sigma\RVert_\infty\right\} \\
&=\gamma  \max \left\{ \LVert (M-M')_\mu\RVert_\infty, \frac{1}{\lambda}\LVert (M-M')_\Sigma\RVert_\infty\right\} \\
&= \gamma \LVert M-M'\RVert_{\lambda}.
\end{split}
\end{equation}
Using \eqref{eq:contraction2first} and \eqref{eq:contraction2second}, we conclude
\begin{equation}
\LVert T^\pi_2 M - T^\pi_2 M'\RVert_{\lambda} := \max\left\{\LVert (T^\pi_2M - T^\pi_2M')_\mu\RVert_\infty,  \frac{1}{\lambda}\LVert (T^\pi_2 M - T^\pi_2 M')_\Sigma)\RVert_\infty\right\} \le \gamma \LVert M-M'\RVert_{\lambda}.
\end{equation}
\end{proof}
\begin{theorem}
$T^\pi_2$ admits a unique fixed point $M^\pi_2$. Moreover, for $\lambda = 2/(1-\gamma)$ and initialization $M_0$, JIPE-$2$ has geometric convergence in $\gamma$ with respect to $\norm{\,\cdot\,}_{\lambda}$:
\begin{equation}
\LVert M_k - M^\pi_2\RVert_{\lambda} \le \gamma^k\LVert M_0 - M^{\pi}_2 \RVert_{\lambda}.
\end{equation}
Furthermore, for any moment collection $M$, define the $2$nd-order Bellman residual as $\LVert M - T^\pi_2 M\RVert_{\lambda}$.
Then,
\begin{equation}
\LVert M - M^\pi_2 \RVert_{\lambda} \le \frac{1}{1-\gamma}\LVert M - T^\pi_2 M\RVert_{\lambda}.
\end{equation}
\end{theorem}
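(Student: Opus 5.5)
The plan is to treat this as a direct application of the Banach fixed-point theorem, with Lemma~\ref{lem:2contraction} supplying the contraction. First I would note that the space of moment collections $M=(M_\mu,M_\Sigma)$ is the finite-dimensional real vector space $\R^{\X}\times\R^{\X^2}$, since $\Ss$ and $\A$ are finite. On this space $\norm{\,\cdot\,}_\lambda$ is a norm, and every norm on a finite-dimensional space is complete. Lemma~\ref{lem:2contraction} shows that $T^\pi_2$ is a $\gamma$-contraction in this norm with $\gamma<1$. Banach's theorem then gives existence and uniqueness of a fixed point.

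To identify this fixed point with $M^\pi_2=(\mu^\pi,\Sigma^\pi)$, I would use the fixed-point identities displayed just before Lemma~\ref{lem:2contraction}. The key ingredients are:
\begin{itemize}
\item Rewards lie in $[0,1]$, so $\lvert Z^\pi\rvert\le 1/(1-\gamma)$. Hence $\mu^\pi$ and $\Sigma^\pi$ are finite and the candidate $M^\pi_2$ lies in the space.
\item The decomposition $Z^\pi(s,a)=R^{(a)}+\gamma Z^\pi(S'^{(a)},A'^{(a)})$ holds under the one-step coupling regime of Assumption~\ref{ass:exo}.
\item Conditioning on $(S'^{(a)},A'^{(a)},\Tilde S'^{(\Tilde a)},\Tilde A'^{(\Tilde a)})$ uses the fact that successor tables are freshly sampled and independent of the current table. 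This replaces $\E[Z^\pi(S',A')\mid\cdot]$ by $\mu^\pi(S',A')$ and the product of the two successor returns by $\Sigma^\pi(S',A',\Tilde S',\Tilde A')$. It also ensures the reward factor $R^{(a)}$ is independent of the tilde branch's continuation given the successor pair.
\end{itemize}
Expanding the product and taking expectations then reproduces exactly \eqref{eq:Tmean} and \eqref{eq:Tcov}, so $T^\pi_2M^\pi_2=M^\pi_2$. By uniqueness, $M^\pi_2$ is the fixed point.

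For the rate, induction gives $\norm{M_k-M^\pi_2}_\lambda=\norm{T^\pi_2M_{k-1}-T^\pi_2M^\pi_2}_\lambda\le\gamma\norm{M_{k-1}-M^\pi_2}_\lambda\le\cdots\le\gamma^k\norm{M_0-M^\pi_2}_\lambda$.

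For the residual certificate, I would use the triangle inequality followed by the contraction:
\begin{equation*}
\norm{M-M^\pi_2}_\lambda\le\norm{M-T^\pi_2M}_\lambda+\norm{T^\pi_2M-T^\pi_2M^\pi_2}_\lambda\le\norm{M-T^\pi_2M}_\lambda+\gamma\norm{M-M^\pi_2}_\lambda.
\end{equation*}
Rearranging gives the factor $1/(1-\gamma)$.

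The only non-routine step is verifying that the true moments satisfy the fixed-point equation. This requires checking carefully that, under Assumption~\ref{ass:exo}, the continuation returns from the successor pair have the required conditional mixed moments. In the same-successor-state case, the two continuations must again be coupled through a shared table, exactly as the definition of $\Sigma^\pi$ prescribes. I would handle this by defining the pair process recursively, so that $\Sigma^\pi$ is by construction the mixed moment of that process. Everything else is standard.
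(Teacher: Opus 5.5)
Your proposal is correct and matches the paper's proof. Existence, uniqueness and the $\gamma^k$ rate come from Banach's fixed-point theorem applied to Lemma~\ref{lem:2contraction}, and the residual bound comes from the same triangle-inequality-plus-contraction step followed by rearranging; your remarks on completeness and on checking $T^\pi_2 M^\pi_2 = M^\pi_2$ through the recursively defined pair process just make explicit what the paper takes to hold ``by construction.''
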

\begin{proof}
The proof of the first statement is a simple application of the Banach fixed-point theorem to Lemma \ref{lem:2contraction}. For the second statement, since $M^\pi_2 = T^\pi_2 M^\pi_2$,
\begin{equation}
\begin{split}
\LVert M - M^\pi_2 \RVert_{\lambda} &\le \LVert M - T^\pi_2 M\RVert_{\lambda} + \LVert T^\pi_2 M - T^\pi_2 M^\pi_2\RVert_{\lambda} \\
&\le \LVert M - T^\pi_2 M\RVert_{\lambda} + \gamma \LVert M - M^\pi_2\RVert_{\lambda}.
\end{split}
\end{equation}
Rearranging furnishes the result.
\end{proof}
\begin{lemma}\label{lem:noises}
Let $\mathcal{F}_k$ be a filtration such that each update index $I_k \in \mathcal{I}_2$ is $\mathcal{F}_k$-measurable, and the randomness used to form $\hat{T}^\pi_2(M_k, I_k)$ is $\mathcal{F}_{k+1}$-measurable and conditionally independent of the past given $\mathcal{F}_k$. Let the coordinatewise noise term be 
\begin{equation}
\omega_{k+1}(i) := \hat{T}^\pi_2(M_k, i) - (T^\pi_2M_k)(i).
\end{equation}
Then, for every index $i \in \mathcal{I}_2$ updated at time $k$
\begin{equation}\label{eq:noisemean}
\begin{gathered}
\E\big[\omega_{k+1}(i)\mid \mathcal{F}_k \big] = 0,\\
\end{gathered}
\end{equation}
and there exist finite constants $C_0, C_1>0$ such that
\begin{equation}\label{eq:noise2}
\E\big[\omega_{k+1}(i)^2\mid \mathcal{F}_k \big] \le C_0 + C_1 \LVert M_k \RVert_\lambda^2.
\end{equation}
\end{lemma}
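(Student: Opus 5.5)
The plan is to handle the two claims separately. Both rest on two facts: the one-sample backup $\hat{T}^\pi_2(M_k,i)$ is an unbiased sample of $(T^\pi_2 M_k)(i)$ once $\mathcal{F}_k$ is fixed, and its size is controlled by bounded rewards together with $\norm{M_k}_\lambda$.

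\textbf{Zero conditional mean.} Condition on $\mathcal{F}_k$. Then $M_k$ and $I_k=i$ are fixed, and the fresh draws used in $\hat{T}^\pi_2(M_k,i)$ are conditionally independent of the past. Their conditional law is therefore exactly the law in the definition of $T^\pi_2$, and I would check this case by case.
\begin{itemize}
\item Case (i), the $\mu$ coordinates, and case (ii), the diagonal $\Sigma$ coordinates: the draw is from the induced $1$-JSTM $\mathcal{J}_1(\cdot\mid s,a)$, with $A'\sim\pi(\cdot\mid S')$.
\item Case (iii) with $s=\Tilde{s}$: the draw is from $\mathcal{J}_2(\cdot\mid s,(a,\Tilde{a}))$.
\item Case (iii) with $s\neq\Tilde{s}$: the draw is two independent $\mathcal{J}_1$ draws.
\end{itemize}
These are precisely the laws described after the definition of $T^\pi_2$. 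In case (ii), the expression for $\hat{T}^\pi_2$ coincides with the diagonal simplification of \eqref{eq:Tcov}, since $\Tilde{R}=R$ and $\Tilde{S}'=S'$ there.

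Since $M_k$ is a finite table, every term is integrable. Linearity of conditional expectation (via the freezing lemma) then gives $\E[\hat{T}^\pi_2(M_k,i)\mid\mathcal{F}_k]=(T^\pi_2M_k)(i)$, which is \eqref{eq:noisemean}.

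\textbf{Second-moment bound.} I would use $\E[\omega^2\mid\mathcal{F}_k]=\mathrm{Var}(\hat{T}\mid\mathcal{F}_k)\le\E[\hat{T}^2\mid\mathcal{F}_k]\le\sup|\hat{T}^\pi_2(M_k,i)|^2$, so it suffices to bound the backup pointwise. Rewards lie in $[0,1]$.
\begin{itemize}
\item For a $\mu$ index, $|\hat{T}|\le 1+\gamma\norm{(M_k)_\mu}_\infty\le 1+\gamma\norm{M_k}_\lambda$.
\item For a $\Sigma$ index (cases (ii) and (iii) alike), $|\hat{T}|\le 1+2\gamma\norm{(M_k)_\mu}_\infty+\gamma^2\norm{(M_k)_\Sigma}_\infty\le 1+(2\gamma+\gamma^2\lambda)\norm{M_k}_\lambda$.
\end{itemize}
This uses $\norm{(M_k)_\mu}_\infty\le\norm{M_k}_\lambda$ and $\norm{(M_k)_\Sigma}_\infty\le\lambda\norm{M_k}_\lambda$.

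Set $c:=\max\{\gamma,\,2\gamma+\gamma^2\lambda\}$. Then $|\hat{T}|\le 1+c\norm{M_k}_\lambda$ in every case, and $(x+y)^2\le 2x^2+2y^2$ yields \eqref{eq:noise2} with $C_0=2$ and $C_1=2c^2$.

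\textbf{Main obstacle.} The delicate step is the measure-theoretic justification of the conditional mean. It needs care with the conditional independence hypothesis and the freezing lemma, to pass from "given $\mathcal{F}_k$, $M_k$ is fixed" to evaluating the expectation against the JSTM laws. It also requires confirming, in the same-state versus distinct-state split of case (iii), that the sampling scheme reproduces the joint law used in $T^\pi_2$. Once that identification is made, the remaining estimates are routine.
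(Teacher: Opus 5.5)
Your proposal is correct and follows the paper's proof. It establishes unbiasedness of the one-sample backup by construction, then applies the same pointwise bounds $1+\gamma\norm{M_k}_\lambda$ and $1+(2\gamma+\gamma^2\lambda)\norm{M_k}_\lambda$ on $\lvert\hat{T}^\pi_2(M_k,i)\rvert$ from bounded rewards. The only difference is that you use the conditional-variance identity $\E[\omega_{k+1}(i)^2\mid\mathcal{F}_k]\le\E[\hat{T}^\pi_2(M_k,i)^2\mid\mathcal{F}_k]$, valid once unbiasedness is shown, instead of the paper's $(x-y)^2\le 2x^2+2y^2$ plus Jensen; this gives constants four times smaller ($C_0=2$ rather than $8$).
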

\begin{proof}
Fix a time $k$ and let $i \in \mathcal{I}_2$ be an index updated at time $k$. Equation \eqref{eq:noisemean} holds by construction, since $\hat{T}^\pi_2(M_k, i)$ is a sample-based estimator of the defining conditional expectations for $(T^\pi_2M
_k)(i)$. In other words, $(T^\pi_2 M_k)(i) = \E[\hat{T}^\pi_2(M_k,i)\mid \mathcal{F}_k]$. It remains to prove \eqref{eq:noise2}. Firstly,
\begin{equation}
\begin{split}
\omega_{k+1}(i)^2 &= \big(\hat{T}^\pi_2(M_k, i) - (T^\pi_2M_k)(i)\big)^2 \\
&\le 2\hat{T}^\pi_2(M_k, i)^2 +2(T^\pi_2M_k)(i)^2.
\end{split}
\end{equation}
Taking expectations conditional to $\mathcal{F}_k$ on both sides,
\begin{equation}\label{eq:4times}
\begin{split}
\E\big[ \omega_{k+1}(i)^2 \mid \mathcal{F}_k\big] &\le 2\E\big[\hat{T}^\pi_2(M_k, i)^2 \mid \mathcal{F}_k\big] + 2(T^\pi_2M_k)(i)^2 \\
&\le 2\E\big[\hat{T}^\pi_2(M_k, i)^2 \mid \mathcal{F}_k\big] + 2\E\big[\hat{T}^\pi_2(M_k, i) \mid \mathcal{F}_k \big] \\
&= 4\E\big[\hat{T}^\pi_2(M_k, i)^2 \mid \mathcal{F}_k\big],
\end{split}
\end{equation}
using Jensen's inequality. Thus, it suffices to upper bound $\E[\hat{T}^\pi_2(M_k, i) \mid \mathcal{F}_k]$ by an affine function of $\norm{M_k}^2_\lambda$ for the two cases of $i$ being a $\mu$-index or a $\Sigma$-index.
\begin{enumerate}[leftmargin=*, itemsep=2pt, align=left]
\item[\textbf{Case 1:}] If $i = ((s, a), \mu)$, then by definition of the sample backup, 
\begin{equation}
\hat{T}^\pi_2(M_k, i) = R^{(a)} + \gamma (M_k)_\mu\big(S'^{(a)}, A'^{(a)}\big),
\end{equation}
with $R^{(a)} \in [0, 1]$ almost surely. Hence,
\begin{equation}
\Lvert \hat{T}^\pi_2(M_k, i)\Rvert \le 1 + \gamma \LVert (M_k)_\mu \RVert_\infty \le 1 + \gamma \LVert M_k \RVert_\lambda,
\end{equation}
or alternatively,
\begin{equation}
\hat{T}^\pi_2(M_k, i)^2 \le 2 + 2 \gamma^2 \LVert M_k\RVert^2_\lambda.
\end{equation}
Taking conditional expectations with respect to $\mathcal{F}_k$, and using \eqref{eq:4times}, we have
\begin{equation}\label{eq:c1-1}
\E\big[\omega_{k+1}(i)^2 \mid \mathcal{F}_k\big] \le 8 + 8\gamma^2\LVert M_k\RVert^2_\lambda.
\end{equation}
\item[\textbf{Case 2:}] If $i = ((s, a), (\Tilde{s}, \Tilde{a}), \Sigma)$, we have the sample backup
\begin{equation}
\hat{T}^\pi_2(M, i):=R^{(a)}\Tilde{R}^{(\Tilde{a})}+\gamma R^{(a)} M_\mu\big(\Tilde{S}'^{(\Tilde{a})},\Tilde{A}'^{(\Tilde{a})}\big)+  \gamma \Tilde{R}^{(\Tilde{a})} M_\mu\big(S'^{(a)},A'^{(a)}\big) + \gamma^2 M_\Sigma\big(S'^{(a)}, A'^{(a)}, \Tilde{S}'^{(\Tilde{a})}, \Tilde{A}'^{(\Tilde{a})}\big).
\end{equation}
\end{enumerate}
Once again, $R^{(a)}, \Tilde{R}^{(\Tilde{a})} \in [0, 1]$ almost surely. Then,
\begin{equation}
\begin{split}
\Lvert \hat{T}^\pi_2(M, i) \Rvert &\le 1 + \gamma \Lvert (M_k)_\mu\big(\Tilde{S}'^{(\Tilde{a})},\Tilde{A}'^{(\Tilde{a})}\big)\Rvert + \gamma \Lvert(M_k)_\mu\big(S'^{(a)},A'^{(a)}\big) \Rvert + \gamma^2 \Lvert(M_k)_\Sigma\big(S'^{(a)}, A'^{(a)}, \Tilde{S}'^{(\Tilde{a})}, \Tilde{A}'^{(\Tilde{a})} \big) \Rvert \\
&\le 1 + 2\gamma \LVert (M_k)_\mu \RVert_\infty + \gamma^2 \LVert (M_k)_\Sigma \RVert_\infty.
\end{split}
\end{equation}
Furthermore, we have $\norm{(M_k)_\mu}_\infty \le \norm{M_k}_\lambda$ and $\norm{(M_k)_\Sigma}_\infty \le \lambda\norm{M_k}_\lambda$. Then,
\begin{equation}
\Lvert \hat{T}^\pi_2(M_k, i) \Rvert \le 1+ (2\gamma + \gamma^2\lambda) \LVert M_k \RVert_\lambda,
\end{equation}
or alternatively,
\begin{equation}
\hat{T}^\pi_2(M_k, i)^2 \le 2 + 2(2\gamma + \gamma^2\lambda)^2\LVert M_k \RVert^2_\lambda.
\end{equation}
Taking conditional expectations with respect to $\mathcal{F}_k$, and using \eqref{eq:4times}, we have
\begin{equation}\label{eq:c1-2}
\E\big[\omega_{k+1}(i)^2 \mid \mathcal{F}_k\big] \le 8 + 8(2\gamma +\gamma^2\lambda)^2\LVert M_k \RVert^2_\lambda.
\end{equation}
Combining \eqref{eq:c1-1} and \eqref{eq:c1-2} to define
\begin{equation}
C_0 := 8, \qquad C_1:= 8 \max\left\{\gamma^2, (2\gamma +\gamma^2\lambda)^2 \right\}
\end{equation}
finishes the proof.
\end{proof}

\begin{theorem}
Assume each coordinate $i \in \mathcal{I}_2$ is selected infinitely often by $I_k$. Let the step size $\alpha_k(i)$ satisfy the conditions of \citet{robbins1951stochastic} for every index $i$:
\begin{equation}
\sum_k \alpha_k(i) = \infty,\quad \sum_k\alpha_k(i)^2 < \infty.
\end{equation} 
Then, the incremental JIPE-$2$ iteration converges almost surely in $\norm{\,\cdot\,}_\lambda$, i.e.,
\begin{equation}
\LVert M_k - M^\pi_2 \RVert_\lambda \to 0 \;\;\text{almost surely as} \; k \to \infty.
\end{equation}
\end{theorem}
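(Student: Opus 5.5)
The plan is to cast incremental JIPE-$2$ as an asynchronous stochastic approximation scheme and apply the classical convergence theorem of \citet{tsitsiklis1994asynchronous} (see also \citet{jaakkola1993convergence, bertsekas1995neuro}) for iterations of the form
\begin{equation*}
M_{k+1}(i) = M_k(i) + \alpha_k(i)\big((T^\pi_2 M_k)(i) - M_k(i) + \omega_{k+1}(i)\big)
\end{equation*}
on the finite index set $\mathcal{I}_2$, with $\alpha_k(i) := 0$ whenever $i \ne I_k$. I will first rewrite the recursion in exactly this form by adding and subtracting $(T^\pi_2 M_k)(i)$, so that $\omega_{k+1}(i) = \hat{T}^\pi_2(M_k,i) - (T^\pi_2 M_k)(i)$ is the noise term of Lemma~\ref{lem:noises}. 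I will work with the filtration $\mathcal{F}_k$ generated by $M_0$, the indices $I_0,\dots,I_k$, the step sizes up to time $k$, and the samples used up to time $k-1$. Then $M_k$, $I_k$ and $\alpha_k(\cdot)$ are $\mathcal{F}_k$-measurable, and the fresh JSTM draws at step $k$ are conditionally independent of $\mathcal{F}_k$.

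Next I will verify the hypotheses of the theorem one at a time. \textbf{Step sizes:} each $\alpha_k(i)\in(0,1)$ is nonnegative, and the zero-padded sequence inherits $\sum_k \alpha_k(i)=\infty$ and $\sum_k \alpha_k(i)^2<\infty$ because every coordinate is selected infinitely often and the Robbins--Monro conditions are imposed along each coordinate's update times. \textbf{Noise:} Lemma~\ref{lem:noises} gives directly $\E[\omega_{k+1}(i)\mid\mathcal{F}_k]=0$ and $\E[\omega_{k+1}(i)^2\mid\mathcal{F}_k]\le C_0 + C_1\norm{M_k}_\lambda^2$. Since $\norm{\cdot}_\lambda$ and $\norm{\cdot}_\infty$ are equivalent on the finite-dimensional space $\R^{\mathcal{I}_2}$, this is the required bound of the form $A + B\max_j |M_k(j)|^2$. \textbf{Operator:} Lemma~\ref{lem:2contraction} shows that $T^\pi_2$ is a $\gamma$-contraction in $\norm{\cdot}_\lambda$. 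This norm is a weighted maximum norm, with weight $1$ on $\mu$-coordinates and weight $\lambda$ on $\Sigma$-coordinates, which is exactly the kind of contraction the theorem requires. Its fixed point is $M^\pi_2$ by Theorem~\ref{thm:jipe2}.

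Once these hypotheses hold, the theorem of \citet{tsitsiklis1994asynchronous} first yields almost-sure boundedness of $(M_k)$ and then $M_k\to M^\pi_2$ coordinatewise almost surely. Since $\mathcal{I}_2$ is finite, coordinatewise convergence gives $\norm{M_k - M^\pi_2}_\lambda\to0$ almost surely.

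The main obstacle is checking the noise hypothesis carefully. Lemma~\ref{lem:noises} needs the sampling randomness at step $k$ to be conditionally independent of $\mathcal{F}_k$, so I must make sure the filtration is built so that the step sizes and the choice of $I_k$ do not anticipate the samples. The lemma also gives a bound in $\norm{\cdot}_\lambda$ rather than the sup norm, so the norm-equivalence constants have to be written out explicitly. Finally, I should check that the unbiasedness $\E[\hat{T}^\pi_2(M_k,i)\mid\mathcal{F}_k] = (T^\pi_2 M_k)(i)$ holds in each of the three sampling cases (i)--(iii). This follows because each case draws from exactly the induced $\mathcal{J}_1$ or $\mathcal{J}_2$ law that defines $T^\pi_2$.
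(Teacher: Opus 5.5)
Your proposal is correct and takes the same approach as the paper: recast incremental JIPE-$2$ as an asynchronous stochastic approximation scheme for $T^\pi_2$, then invoke the contraction-based convergence theorem using Lemma~\ref{lem:2contraction} (a weighted max-norm contraction) and Lemma~\ref{lem:noises} (martingale-difference noise with affinely bounded conditional variance). The paper cites the corresponding theorem in \citet{DRL-textbook} rather than \citet{tsitsiklis1994asynchronous}, and it leaves implicit the filtration, the norm equivalence, and the zero-padded step sizes (the Robbins--Monro conditions read along each coordinate's update times) that you make explicit.
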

\begin{proof}
By Lemma \ref{lem:2contraction}, $T^\pi_2$ is a contraction map on the space of moment collections under $\norm{\,\cdot\,}_\lambda$, with unique fixed point $M^\pi_2$ (Theorem \ref{thm:jipe2}). The incremental JIPE-$2$ recursion is an asynchronous stochastic approximation scheme \citep{jaakkola1993convergence, tsitsiklis1994asynchronous, bertsekas1995neuro} for $T^\pi_2$, with martingale-difference noise and controlled conditional second moment (Lemma \ref{lem:noises}). The step size and visitation conditions are those required by the contraction-map stochastic approximation theorem by \citet{DRL-textbook}, yielding almost-sure convergence to the fixed point.
\end{proof}

\newpage
\section{JIPE-2 with Function Approximation}\label{app:funcapprox}
The exact DP algorithm, namely JIPE-$2$, is restricted to environments with discrete and appropriately small state-action spaces $\mathcal{X}$. In high-dimensional or continuous state spaces, representing the true moment collection $M=(M_\mu,M_\Sigma)$ exactly may become computationally intractable. We therefore study a projected fixed-point formulation in which the candidate moment collection is restricted to a parameterized family
\begin{equation}
\hat{M}(\theta) = \big(\hat{M}_\mu(\theta_\mu),\hat{M}_\Sigma(\theta_\Sigma)\big).
\end{equation}

\paragraph{Function classes.}
For the first moment, let $\phi_\mu:\mathcal{X}\to\mathbb{R}^{d_\mu}$ be a feature map and define
\begin{equation}
\hat{M}_\mu(x;\theta_\mu):=\phi_\mu(x)^\top\theta_\mu,\qquad x=(s,a)\in\mathcal{X}.
\end{equation}
For the second moment, we use a PSD-preserving quadratic form
\begin{equation}
\hat{M}_\Sigma(x,y;\Theta_\Sigma) := \phi_\mu(x)^\top\Theta_\Sigma\phi_\mu(y),\qquad \Theta_\Sigma\succeq 0.
\end{equation}
Let $\mathcal{K}_\Sigma$ denote the closed convex cone of second-moment functions representable in this form. This parameterization ensures that every finite Gram matrix produced by $\hat{M}_\Sigma$ is positive semidefinite, so the approximation respects the second-moment geometry of return random variables.

\begin{assumption}\label{ass:rank}
The feature matrix $\Phi_\mu\in\mathbb{R}^{|\mathcal{X}|\times d_\mu}$ has full column rank.
\end{assumption}

\begin{assumption}\label{ass:ergodic}
The Markov chain over $\mathcal{X}$ induced by the stationary policy $\pi$ is ergodic, with unique stationary state-action distribution $\nu$ satisfying $\nu(x)>0$ for all $x\in\mathcal{X}$.
\end{assumption}

For first moments, we use the standard on-policy geometry
\begin{equation}\label{eq:inner1}
\langle f,g\rangle_\nu := \sum_{x\in\mathcal{X}}\nu(x)f(x)g(x), \qquad \|f\|_\nu := \sqrt{\langle f,f\rangle_\nu}.
\end{equation}

For second moments, the natural transition operator is the pair kernel $P_2$ on $\mathcal{X}^2$ induced by the JMDP coupling and the fixed policy $\pi$. Since $\mathcal{X}^2$ is finite, $P_2$ has at least one stationary distribution. Fix any such distribution and call it $\mu_2$, so that
\begin{equation}
\mu_2 P_2=\mu_2.
\end{equation}
The marginal dynamics of each coordinate of the pair process agree with the policy-induced marginal kernel $P^\pi$. By Assumption \ref{ass:ergodic}, the marginals of any stationary $\mu_2$ must therefore equal $\nu$. The distribution $\mu_2$ need not be unique and need not have full support. Let
\begin{equation}
\operatorname{supp}(\mu_2) := \{(x,y)\in\mathcal{X}^2:\mu_2(x,y)>0\}.
\end{equation}
All $L_2(\mu_2)$ statements below are therefore support-relative, equivalently, stated modulo functions that agree on $\operatorname{supp}(\mu_2)$. Define
\begin{equation}\label{eq:inner2}
\langle h,\ell\rangle_{\mu_2} := \sum_{x,y\in\mathcal{X}}\mu_2(x,y)h(x,y)\ell(x,y),\qquad \|h\|_{\mu_2}:=\sqrt{\langle h,h\rangle_{\mu_2}}.
\end{equation}

\paragraph{Projection.}
We use a block-diagonal projection $\Pi M = (\Pi_\mu M_\mu,\Pi_\Sigma M_\Sigma)$, where
\begin{equation}\label{eq:least-squares}
\Pi_\mu M_\mu := \arg\min_{\hat{M}_\mu\in\mathrm{span}(\Phi_\mu)}\|M_\mu-\hat{M}_\mu\|_\nu
\end{equation}
and
\begin{equation}\label{eq:psd-projection}
\Pi_\Sigma M_\Sigma := \arg\min_{\hat{M}_\Sigma\in\mathcal{K}_\Sigma}\|M_\Sigma-\hat{M}_\Sigma\|_{\mu_2}.
\end{equation}
The first projection is the usual orthogonal projection. The second is projection onto a closed convex cone in the Hilbert space $L_2(\mu_2)$, hence is non-expansive. If $\mu_2$ is not full support, the projection does not identify values outside $\operatorname{supp}(\mu_2)$. The projected fixed-point equation is
\begin{equation}\label{eq:projection}
  \hat{M}^\ast=\Pi T^\pi_2\hat{M}^\ast.
\end{equation}

\subsection{Projected JIPE-2}
Projected JIPE-$2$ iterates
\begin{equation}
M_{k+1}=\Pi T^\pi_2 M_k.
\end{equation}
In parameters, the first-moment update has the usual least-squares form
\begin{equation}
\theta_{\mu,k+1} = \big(\Phi_\mu^\top D_\nu\Phi_\mu\big)^{-1} \Phi_\mu^\top D_\nu T^\pi_{2,\mu}\hat{M}(\theta_k),
\end{equation}
where $D_\nu$ is diagonal with entries $\nu(x)$. The second-moment update is the PSD-constrained projection
\begin{equation}
\Theta_{\Sigma,k+1} = \arg\min_{\Theta\succeq 0} \left\|\Phi_\mu\Theta\Phi_\mu^\top - T^\pi_{2,\Sigma}\hat{M}(\theta_k)\right\|_{D_{\mu_2}}^2,
\end{equation}
where $D_{\mu_2}$ is diagonal over $\mathcal{X}^2$ with entries $\mu_2(x,y)$, and $T^\pi_{2,\Sigma}\hat{M}(\theta_k)$ is viewed as a $|\mathcal{X}|\times|\mathcal{X}|$ matrix indexed by pair coordinates.

\subsection{Analysis of Projected JIPE-2}
In the pair-stationary geometry, the coupled pair transition is automatically non-expansive in the norm used for the second-moment projection. The next lemma formalizes this.

\begin{lemma}\label{lem:non-expansive}
The marginal transition operator $P^\pi$ is non-expansive in $L_2(\nu)$, and the pair transition operator $P_2$ is non-expansive in $L_2(\mu_2)$.
\end{lemma}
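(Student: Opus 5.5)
The plan is the standard stationarity-plus-Jensen argument, carried out once for each operator. Both $P^\pi$ and $P_2$ are Markov kernels on finite spaces. For $f:\X\to\R$, the marginal operator acts as
\[
(P^\pi f)(x)=\E\bigl[f(X')\mid X=x\bigr],
\]
where $X'=(S'^{(a)},A'^{(a)})$ is drawn from the induced $1$-JSTM followed by $\pi$. For $h:\X^2\to\R$, the pair operator acts as
\[
(P_2 h)(x,y)=\E\bigl[h(X',Y')\mid (X,Y)=(x,y)\bigr].
\]
Here $(X',Y')$ is drawn from the joint law used in $T^\pi_2$: a $2$-JSTM draw in the same-state case, independent $1$-JSTM draws otherwise, with actions then sampled from $\pi$. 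I would first record that $P_2$ is a genuine Markov kernel on $\X^2$, since its rows are probability measures. This is what makes the stationary distribution $\mu_2$ well defined.

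For the marginal operator, the key step is the pointwise conditional Jensen inequality $(P^\pi f)(x)^2\le (P^\pi f^2)(x)$, which holds because $P^\pi(\cdot\mid x)$ is a probability measure. Summing against $\nu$ then gives
\[
\norm{P^\pi f}_\nu^2\le\sum_x\nu(x)(P^\pi f^2)(x)=\sum_{x'}(\nu P^\pi)(x')f(x')^2=\norm{f}_\nu^2,
\]
where the last equality uses stationarity $\nu P^\pi=\nu$ from Assumption~\ref{ass:ergodic}.

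The pair operator is handled identically. Jensen gives $(P_2h)^2\le P_2(h^2)$ pointwise on $\X^2$, and $\mu_2P_2=\mu_2$ then yields $\norm{P_2h}_{\mu_2}\le\norm{h}_{\mu_2}$. Two observations make this sufficient:
\begin{itemize}
\item No ergodicity or full support of $\mu_2$ is needed, only that $\mu_2$ is stationary.
\item The bound is support-relative. If $h$ and $h'$ agree on $\operatorname{supp}(\mu_2)$, then $\mu_2P_2=\mu_2$ implies that $P_2(\cdot\mid x,y)$ puts no mass outside the support for $\mu_2$-almost every $(x,y)$. Hence $P_2h=P_2h'$ $\mu_2$-a.e., so $P_2$ is well defined and non-expansive on the quotient space $L_2(\mu_2)$.
\end{itemize}

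The main obstacle is not the inequality itself but a careful specification of $P_2$ as a single kernel. Its definition switches between the coupled $\mathcal{J}_2$ draw when $s=\Tilde{s}$ and independent draws when $s\ne\Tilde{s}$, and in the diagonal case $a=\Tilde{a}$ it must use the same draw for both coordinates. I would check that each case yields a probability measure on $\X^2$, so that Jensen applies row by row. I would also note in passing that the marginals of each row coincide with $P^\pi$; this is the fact the text uses to identify the marginals of $\mu_2$ with $\nu$, although the non-expansiveness proof itself does not require it.
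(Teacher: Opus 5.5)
Your proposal is correct and matches the paper's proof: apply Jensen's inequality row by row, $(P f)^2 \le P(f^2)$, and then use stationarity ($\nu P^\pi=\nu$, $\mu_2 P_2=\mu_2$) to collapse the weighted sum back to $\|f\|^2$. Your extra remarks are sound additions that the paper leaves implicit: that $P_2$ is a genuine Markov kernel across the same-state, distinct-state and diagonal cases, and that the bound is well defined on the support-relative quotient of $L_2(\mu_2)$.
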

\begin{proof}
For $P^\pi$, by Jensen's inequality and stationarity of $\nu$,
\begin{equation}
\begin{split}
\|P^\pi f\|_\nu^2 &= \sum_x\nu(x)\left(\sum_{x'}P^\pi(x'\mid x)f(x')\right)^2\\
&\le \sum_x\nu(x)\sum_{x'}P^\pi(x'\mid x)f(x')^2 = \sum_{x'}\nu(x')f(x')^2 = \|f\|_\nu^2.
\end{split}
\end{equation}
The proof for $P_2$ is identical, using stationarity of $\mu_2$:
\begin{equation}
\begin{split}
\|P_2 h\|_{\mu_2}^2 &= \sum_z\mu_2(z)\left(\sum_{z'}P_2(z'\mid z)h(z')\right)^2 \\
&\le \sum_z\mu_2(z)\sum_{z'}P_2(z'\mid z)h(z')^2 = \sum_{z'}\mu_2(z')h(z')^2 =\|h\|_{\mu_2}^2,
\end{split}
\end{equation}
where $z=(x,y)\in\mathcal{X}^2$.
\end{proof}

\begin{theorem}
Under Assumptions \ref{ass:rank} and \ref{ass:ergodic}, for any stationary distribution $\mu_2$ of $P_2$, there exists $\beta > 0$ such that the projected joint Bellman operator $\Pi T^\pi_2$ is a $\kappa$-contraction in the weighted norm
\begin{equation}
\|M\|_{\nu,\mu_2,\beta} := \max\left\{\|M_\mu\|_\nu,\beta\|M_\Sigma\|_{\mu_2}\right\},
\end{equation}
with some $\kappa < 1$. Consequently, the projected iteration $M_{k+1} = \Pi T^\pi_2M_k$ converges to a unique projected fixed point $\hat{M}^\ast$ in this norm, equivalently a unique fixed point on $\operatorname{supp}(\mu_2)$ for the second-moment coordinate, and
\begin{equation}
\|\hat{M}^\ast - M^\pi_2\|_{\nu,\mu_2,\beta} \le \frac{1}{1-\kappa} \|\Pi M^\pi_2 - M^\pi_2\|_{\nu,\mu_2,\beta}.
\end{equation}
Moreover, for any pair $(x,y)$ with $\mu_2(x,y) > 0$,
\begin{equation}
\left|\hat{M}^\ast_\Sigma(x,y) - M^\pi_{2,\Sigma}(x,y)\right| \le \frac{1}{\beta\sqrt{\mu_2(x,y)}}\, \|\hat{M}^\ast - M^\pi_2\|_{\nu,\mu_2,\beta}.
\end{equation}
Thus, the theorem provides a pointwise guarantee for every pair state in the support of the chosen stationary pair distribution. If $\mu_2$ has full support, this becomes a guarantee over all pair states.
\end{theorem}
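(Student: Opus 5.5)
I will write the joint operator in block form. For a moment collection $M$,
\begin{equation*}
(T^\pi_2 M)_\mu = r_1 + \gamma P^\pi M_\mu,
\qquad
(T^\pi_2 M)_\Sigma = r_2 + \gamma B M_\mu + \gamma^2 P_2 M_\Sigma .
\end{equation*}
Here $r_1, r_2$ collect the reward terms. The cross operator $B$ is
\begin{equation*}
(Bf)(x,y) = \E\bigl[R_x f(X'_y) + R_y f(X'_x)\bigr],
\end{equation*}
with the expectation taken under the pair law defining $P_2$. Since $\Pi$ is block diagonal, the differences split as
\begin{align*}
\Pi_\mu\bigl(T M - T M'\bigr)_\mu &= \gamma\, \Pi_\mu P^\pi \Delta_\mu,\\
\Pi_\Sigma (TM)_\Sigma - \Pi_\Sigma (TM')_\Sigma &\quad \text{(the second-moment block).}
\end{align*}
The first block is linear. $\Pi_\mu$ is an orthogonal projection in $L_2(\nu)$, hence non-expansive, and Lemma \ref{lem:non-expansive} gives
\begin{equation*}
\|\gamma \Pi_\mu P^\pi \Delta_\mu\|_\nu \le \gamma \|\Delta_\mu\|_\nu .
\end{equation*}

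For the second block, $\Pi_\Sigma$ is the metric projection onto the closed convex cone $\mathcal{K}_\Sigma$ in $L_2(\mu_2)$, so it is 1-Lipschitz. This gives
\begin{equation*}
\|\Pi_\Sigma(TM)_\Sigma - \Pi_\Sigma(TM')_\Sigma\|_{\mu_2} \le \gamma \|B\Delta_\mu\|_{\mu_2} + \gamma^2\|P_2\Delta_\Sigma\|_{\mu_2}.
\end{equation*}
The last term is at most $\gamma^2\|\Delta_\Sigma\|_{\mu_2}$ by Lemma \ref{lem:non-expansive}. For the cross term, rewards lie in $[0,1]$, so Jensen gives
\begin{equation*}
\|B f\|_{\mu_2}^2 \le 2\sum_{(x,y)}\mu_2(x,y)\Bigl(\E\bigl[f(X'_y)^2\bigr] + \E\bigl[f(X'_x)^2\bigr]\Bigr).
\end{equation*}
Each coordinate of the pair chain moves by $P^\pi$, and Assumption \ref{ass:ergodic} forces both marginals of $\mu_2$ to equal $\nu$. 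Hence each inner sum equals $\|P^\pi (f^2)\|_{\nu,1} = \|f\|_\nu^2$, and $\|Bf\|_{\mu_2}\le 2\|f\|_\nu$.

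Putting the blocks together,
\begin{equation*}
\beta\|\cdot\|_{\mu_2} \le 2\gamma\beta \|\Delta_\mu\|_\nu + \gamma^2 \beta\|\Delta_\Sigma\|_{\mu_2}.
\end{equation*}
This is at most $(2\gamma\beta+\gamma^2)\|\Delta\|_{\nu,\mu_2,\beta}$. I will choose $\beta$ so that $2\gamma\beta + \gamma^2 < 1$, for example $\beta = (1-\gamma)/(2\gamma)\cdot\gamma = (1-\gamma)/2$, which gives $\kappa = \max\{\gamma, \gamma + \gamma^2 - \gamma^2\} = \gamma$. More carefully, $2\gamma\beta+\gamma^2 = \gamma(1-\gamma)+\gamma^2 = \gamma$, so $\kappa=\gamma$.

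The main obstacle is degeneracy. Because $\mu_2$ may lack full support, $\|\cdot\|_{\mu_2}$ is only a seminorm. I will handle this by working in the quotient space of functions modulo agreement on $\operatorname{supp}(\mu_2)$. The first block is a genuine norm, since $\nu>0$ everywhere. Stationarity of $\mu_2$ implies that $P_2(\cdot\mid z)$ is supported in $\operatorname{supp}(\mu_2)$ for $z$ in the support. So the projected operator is well defined on equivalence classes, and Banach's theorem applies on this complete finite-dimensional quotient. That yields existence and uniqueness of $\hat M^*$ and geometric convergence of the iteration. I will also need $\mathcal{K}_\Sigma$ to be closed in the quotient; this follows because its image is a linear image of the PSD cone, which I will check is closed using the full column rank in Assumption \ref{ass:rank}.

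The error bound follows from the standard argument. Write
\begin{equation*}
\hat M^* - M^\pi_2 = \bigl(\Pi T\hat M^* - \Pi T M^\pi_2\bigr) + \bigl(\Pi M^\pi_2 - M^\pi_2\bigr),
\end{equation*}
using $TM^\pi_2 = M^\pi_2$. Then apply the contraction and rearrange to get the factor $1/(1-\kappa)$. The pointwise bound on the support follows from
\begin{equation*}
\mu_2(x,y)\,h(x,y)^2 \le \|h\|_{\mu_2}^2 \le \beta^{-2}\|\cdot\|_{\nu,\mu_2,\beta}^2 .
\end{equation*}
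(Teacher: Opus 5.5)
Your proof is correct and follows the paper's argument essentially step for step: non-expansiveness of $\Pi_\mu$ and $\Pi_\Sigma$, Lemma~\ref{lem:non-expansive} for $P^\pi$ and $P_2$, Jensen with rewards in $[0,1]$ and the $\nu$-marginals of $\mu_2$ for the cross terms, then Banach and the standard projected fixed-point bound. You add two small improvements: your $\beta=(1-\gamma)/2$ lies in the paper's admissible range $\beta<(1-\gamma^2)/(2\gamma)$ and gives the sharper $\kappa=\gamma$, and you correctly apply 1-Lipschitzness to $\Pi_\Sigma(TM)_\Sigma-\Pi_\Sigma(TM')_\Sigma$, whereas the paper applies the nonlinear cone projection to a difference.

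Your quotient-space treatment of the seminorm is rigor the paper omits. The one imprecision is your justification for closedness of the image of $\mathcal{K}_\Sigma$ on $\operatorname{supp}(\mu_2)$: being a linear image of the PSD cone does not imply it. It does follow from full column rank once $\operatorname{supp}(\mu_2)$ contains the diagonal pairs $(x,x)$, because $\phi(x)^\top\Theta\phi(x)=0$ for all $x$ with $\Theta\succeq 0$ forces $\Theta=0$. Reading Assumption~\ref{ass:ergodic} as aperiodic ergodicity, this is automatic, since the diagonal is then the unique recurrent class of $P_2$.
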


\begin{proof}
Let $M$ and $\bar{M}$ be arbitrary moment collections and write $\Delta = M - \bar{M}$. By non-expansiveness of $\Pi_\mu$ and Lemma \ref{lem:non-expansive},
\begin{equation}
\|\Pi_\mu(T^\pi_{2,\mu}M-T^\pi_{2,\mu}\bar{M})\|_\nu \le \gamma\|\Delta_\mu\|_\nu.
\end{equation}

For the second moment, for $X = (s,a)$ and $Y = (\tilde{s},\tilde{a})$,
\begin{equation}
(T^\pi_{2,\Sigma}\Delta)(X,Y) = \gamma\E[R^{(a)}\Delta_\mu(Y')\mid X,Y] + \gamma\E[\tilde{R}^{(\tilde{a})}\Delta_\mu(X')\mid X,Y] + \gamma^2\E[\Delta_\Sigma(X',Y')\mid X,Y].
\end{equation}
The rewards are in $[0,1]$, so Jensen's inequality and the fact that the $Y'$ marginal under stationary pair dynamics is $\nu$ imply
\begin{equation}
\begin{split}
\left\|\E[| \Delta_\mu(Y') |\mid X,Y]\right\|_{\mu_2}^2 &\le \E_{\mu_2}\left[\E[\Delta_\mu(Y')^2\mid X,Y]\right] \\
&= \E_{\mu_2}[\Delta_\mu(Y')^2] = \|\Delta_\mu\|_\nu^2.
\end{split}
\end{equation}
The same bound holds for the symmetric term involving $X'$. For the final term, we use Lemma \ref{lem:non-expansive} to get
\begin{equation}
\|P_2\Delta_\Sigma\|_{\mu_2}\le \|\Delta_\Sigma\|_{\mu_2}.
\end{equation}
Since $\Pi_\Sigma$ is non-expansive in $L_2(\mu_2)$,
\begin{equation}
\|\Pi_\Sigma(T^\pi_{2,\Sigma}M - T^\pi_{2,\Sigma}\bar{M})\|_{\mu_2} \le 2\gamma\|\Delta_\mu\|_\nu+\gamma^2\|\Delta_\Sigma\|_{\mu_2}.
\end{equation}
Therefore
\begin{equation}
\|\Pi T^\pi_2M-\Pi T^\pi_2\bar{M}\|_{\nu,\mu_2,\beta} \le \max\left\{\gamma, 2\beta\gamma + \gamma^2 \right\}\|\Delta\|_{\nu,\mu_2,\beta}.
\end{equation}
Choose any
\begin{equation}
0 < \beta < \frac{1-\gamma^2}{2\gamma}.
\end{equation}
Then $\kappa := \max\{\gamma,2\beta\gamma + \gamma^2\} < 1$. Banach's fixed-point theorem guarantees convergence to a unique fixed point in the weighted norm. The approximation bound follows from the standard projected Bellman argument:
\begin{equation}
\begin{split}
\|\hat{M}^\ast - M^\pi_2\| &= \|\Pi T^\pi_2\hat{M}^\ast - T^\pi_2M^\pi_2\| \\
&\le \|\Pi T^\pi_2\hat{M}^\ast - \Pi T^\pi_2M^\pi_2\| + \|\Pi M^\pi_2-M^\pi_2\| \\
&\le \kappa\|\hat{M}^\ast - M^\pi_2\| + \|\Pi M^\pi_2 - M^\pi_2\|,
\end{split}
\end{equation}
where all norms are $\|\cdot\|_{\nu,\mu_2,\beta}$. Rearranging yields the claim.
\end{proof}

\newpage
\section{ALE implementation and gap diagnostics}\label{app:ale}

The ALE experiments use the same multi-action query interface as the tabular experiments, implemented by cloning the emulator state, restoring it for each queried action, applying shared exogenous shocks across the queried actions, and advancing only the executed branch. In the reward-coupled diagnostic below, the shared exogenous shock is the reward perturbation used to construct a nontrivial same-scenario gap law while preserving a controlled evaluation setting.

For neural second-moment approximation, we parameterize the covariance component through a Gram representation,
\begin{equation}
M_\Sigma(x,y) = M_\mu(x)M_\mu(y) + \langle v(x),v(y)\rangle,
\end{equation}
which ensures that the estimated covariance matrix is positive semidefinite. Training also minimizes a TD residual for the implied gap second moment,
\begin{equation}
E[(Z_a - Z_b)^2] = M_\Sigma(x_a,x_a) + M_\Sigma(x_b,x_b) - 2M_\Sigma(x_a,x_b).
\end{equation}
thereby directly supervising the combination of second-moment entries that determines action-gap variance.

\begin{table}[h]
\centering
\caption{\textbf{Reward-coupled ALE gap-variance validation.}
For each game, we sample emulator states, estimate the action-gap variance from recursive-pair MC, and report the ratios of the learned coupled prediction and an independent-marginals prediction to the same MC estimate. Values near $1.0$ represent better calibrated estimates to the coupled MC gap variance.}
\label{tab:ale-gap-ratios}
\begin{tabular}{lcc}
\toprule
Game & Coupled / MC & Independent / MC \\
\midrule
Atlantis & 1.03 & 83.3 \\
BattleZone & 3.87 & 115.7 \\
Boxing & 3.97 & 111.1 \\
Pong & 1.30 & 135.4 \\
\bottomrule
\end{tabular}
\end{table}

Across the four games, the coupled estimates are within a factor of $1.03$--$3.97$ of the recursive-pair MC gap variance, while the independent-marginals estimates severely overestimate it by factors of $83.3$--$135.4$.
\end{document}